\pgfplotsset{compat=1.18} 
\newtheorem{theorem}{Theorem}
\newcommand{\augU}{{\widehat{U}}}
\newcommand{\augS}{{\widehat{S}}}
\newcommand{\augV}{{\widehat{V}}}
\newcommand{\rom}[1]{\uppercase\expandafter{\romannumeral #1\relax}}
\newcommand{\fx}{\mathbf{x}}
\newcommand{\fz}{\mathbf{z}}
\def\ALGNAME{ALG}
\title{A geometric framework for momentum-based optimizers for low-rank training}
\author{%
  Steffen Schotth\"ofer\thanks{Computer Science and Mathematics Division; Oak Ridge National Laboratory; 
  Oak Ridge, TN 37831 USA; Mail correspondence: \texttt{schotthofers@ornl.gov}}\,\,\,\,
 Timon Klein\thanks{Department of Mathematics;
Otto von Guericke University Magdeburg;
39106 Magdeburg; 
Germany}\,\,\,\, and\,\,\, 
  Jonas Kusch\thanks{Scientific Computing; Norwegian University of Life Sciences; 
Drøbakveien 31, 1433 Ås; Norway}
  % examples of more authors
  % \And
  % Coauthor \\
  % Affiliation \\
  % Address \\
  % \texttt{email} \\
  % \AND
  % Coauthor \\
  % Affiliation \\
  % Address \\
  % \texttt{email} \\
  % \And
  % Coauthor \\
  % Affiliation \\
  % Address \\
  % \texttt{email} \\
  % \And
  % Coauthor \\
  % Affiliation \\
  % Address \\
  % \texttt{email} \\
}
\begin{document}

\maketitle

\begin{abstract}
 Low-rank pre-training and finetuning have recently emerged as promising techniques for reducing the computational and storage costs of large neural networks. Training low-rank parameterizations typically relies on conventional optimizers such as heavy ball momentum methods or Adam. In this work, we identify and analyze potential difficulties that these training methods encounter when used to train low-rank parameterizations of weights. In particular, we show that classical momentum methods can struggle to converge to a local optimum due to the geometry of the underlying optimization landscape. To address this, we introduce novel training strategies that combine dynamical low-rank approximation with momentum-based optimization, explicitly accounting for the intrinsic geometry of the parameter space. We validate our methods through numerical experiments, demonstrating stronger validation metrics at given parameter budgets.
\end{abstract}

% TODOs
% \begin{itemize}
%     \item Add Llama7b table to paragraph.  Also add network and dataset description to appendix
%     \item GPT-2 + convergence plot.  Also add network and dataset description to appendix
% \end{itemize}

\section{Introduction}
Deep learning models have achieved remarkable success across natural language processing and computer vision tasks, but their deployment remains computationally expensive due to the large number of trainable parameters. To address this, parameter-efficient strategies have been developed to reduce memory and compute requirements during training. Common approaches include sparsification \cite{guo2016dynamic, molchanov2017pruning, he2017channel}, quantization \cite{wu2016quantized, courbariaux2016binarized}, and layer factorization. The latter has gained considerable attention for pre-training \cite{wang2021pufferfish, khodak2021initialization, schotthofer2022low, schotthöfer2024federateddynamicallowranktraining, zangrando2023rank} and especially finetuning \cite{hu2021lora, valipour2023dylora, zhang2023adalora, hayou2024lora, zhao2024galore, lialin2024relora, schotthöfer2024GeoLoRAgeometricintegrationparameter}. Layer factorization represents weights (or adapters) as low-rank matrices, allowing only the low-rank factors to be trained. This significantly reduces both memory usage and computational cost.

In the class of low-rank layer factorizations, one of the most popular methods is LoRA \citep{hu2021lora}, which applied typical optimizers such as stochastic gradient descent (SGD) or Adaptive Moment Estimation (Adam) \cite{adam} directly to the low-rank factors. The combination of these optimizers with LoRA does not guarantee an optimal optimization trajectory \cite{schotthöfer2024GeoLoRAgeometricintegrationparameter,zhang2023adalora}. To overcome the former challenge, various improvements have been proposed: For example,   LoRA+ \cite{hayou2024lora} proposes the use of separate learning rates for different components of the low-rank decomposition, Dora \citep{Mao2024DoRAEP} normalizes the factor matrices and introduces a magnitude parameter. Furthermore, to overcome the challenge of tuning the rank of the LoRA ansatz, AdaLoRA \cite{zhang2023adalora}  adaptively allocating the parameter budget during training, by masking off rows and columns of the adapter matrices.
Other low-rank methods focus entirely on the optimizer states: GaLore \cite{zhao2024galore} projects full-rank weight gradients into a low-rank subspace to reduce the memory footprint of the optimizer state. Tensor-GaLore \cite{tensorgalore} generalizes this technique to high-order tensor-parameterized models, further improving efficiency for large-scale architectures. Adafactor~\cite{adafactor} approximates the second-moment matrix using a low-rank decomposition of Adam with low-rank factors.

A particularly promising class of layer factorization strategies is dynamical low-rank training (DLRT) which has been introduced in \cite{schotthofer2022low} and has since been used in various tasks \cite{zangrando2023rank,schotthöfer2024federateddynamicallowranktraining,schotthöfer2024GeoLoRAgeometricintegrationparameter,coquelin2024harnessing,kusch2025augmentedbackwardcorrectedprojectorsplitting}. 
DLRT projects the gradient flow dynamics onto the tangent space of the manifold of low-rank parameters, thereby achieving convergence guarantees to low-rank optimal weights \cite{schotthöfer2024GeoLoRAgeometricintegrationparameter}. However, these projections are inherently non-smooth, leading to ill-conditioned optimization landscapes and requiring smaller learning rates for stable training \cite{schotthofer2022low}. To ensure robust integration of such projections, DLRT constrains movement to flat subspaces within the low-rank manifold, enabling stable convergence to low-rank optima. The method is rank adaptive - using a basis augmentation and subsequent singular value-based truncation criterion to adapt the rank of the low-rank factorization. It further enables extensions to increase adversarial robustness of the compressed neural networks by enforcing orthonormality of the low-rank bases and projecting onto a well-conditioned manifold \cite{savostianova2023robustlowranktrainingapproximate} or regularizing the condition number of the factor matrix \cite{schotthöfer2025dynamicallowrankcompressionneural}.

Despite its efficiency, current DLRT approaches primarily rely on SGD, and it remains unclear how adaptive optimizers such as Adam \cite{adam} can be effectively applied. This is a significant limitation, as many state-of-the-art models depend on momentum-based optimizers like Adam and its variants for performance and stability. Therefore, the extension of DLRT to such optimization techniques is crucial.

To bridge this gap, we introduce a novel momentum-based optimization framework for low-rank pretraining and finetuning. The method integrates adaptive momentum techniques into the framework of dynamical low-rank training (DLRT), preserving the low-rank structure of model weights while enabling stable and efficient updates. This establishes a link between low-rank optimization and adaptive gradient methods, yielding both theoretical insights and practical improvements. Beyond the method derivation, we analyze why LoRA-style adapters—and low-rank parameterizations more broadly require momentum-based optimizers that are aware of the geometry of the low-rank parameter space, i.e., the underlying manifold structure. Naively applying standard optimizers such as heavy ball to low-rank parameterizations can produce updates that do not correspond to a gradient flow leading to a low-rank optimum. As a result, these methods may fail to converge to valid low-rank solutions. We show how DLRT can be adapted to approximate geometry-respecting gradient flows that consistently drive convergence toward low-rank optima. Together, these contributions lay a foundation for more robust and efficient training of large-scale models under low-rank constraints.

Compared to low-rank methods that act on the optimizer states only \citep{zhao2024galore,tensorgalore,adafactor} we provide a holistic interpretation of a low-rank optimization algorithm that adaptively compresses the network weights, gradients and optimizer states simultaneously, achieving superior compression performance at high validation metrics. We remark that the method is directly extendable for tensor-valued neural networks using, e.g., low-rank Tucker factorization.

The paper is structured as follows: We first discuss limitations of naive momentum methods and show how to adapt the underlying gradient flow to achieve convergence to a low-rank optimum in Section~\ref{sec:momentum-low-rank}. While the adapted gradient flow facilitates convergence, constructing robust numerical optimizers from it is challenging due to its inherent stiffness. We propose a low-rank heavy ball optimizer in Section~\ref{sec:DLRT-heavy-ball} which integrates the adapted gradient flow robustly. In Section~\ref{sec:low-rank-adam}, we construct a fully low-rank Adam optimizer by leveraging insights gained in Section~\ref{sec:DLRT-heavy-ball}. In Section~\ref{sec:num_res} we underline the efficiency of the proposed method through numerical experiments. In particular, we demonstrate fast convergence and superior validation accuracy at high compression levels for training from scratch, transfer learning, and low-rank finetuning of different neural network architectures and benchmarks.

\section{Momentum-based low-rank training}\label{sec:momentum-low-rank}
We consider a low-rank neural network of the form
\begin{align*} 
\mathcal{N}(x) = \sigma_L(U_LS_LV_{L}^\top z_{L-1}(x)) \, ,
\end{align*} 
where $z_{L-1}(x)$ is defined recursively by
\begin{align}\label{eq:model}
    z_0(x) = x \in \mathbb{R}^{n_0} \, ,\quad\text{and}\quad
    z_l(x) = \sigma_l(U_lS_lV_{l}^\top z_{l-1}(x)) \in \mathbb{R}^{n_l}, \quad \forall l &= 1, \dots, L \,.
\end{align} 
Here, the weight matrices are defined as $W_{l} := U_l S_l V_{l}^\top \in \mathbb{R}^{n_{l} \times n_{l-1}}$, where $U_l \in \mathbb{R}^{n_{l} \times r_{l}}$ and $V_l \in \mathbb{R}^{n_{l-1} \times r_{l}}$ are orthonormal low-rank factors, and $S_l \in \mathbb{R}^{r_{l} \times r_{l}}$ is the coefficient matrix. Thus, $W_l$ lies in the manifold of rank $r_l$ matrices which we denote by $\mathcal{M}_{r_l}$. Additionally, $\sigma_{l}$ represents the activation function of layer $l$.  For simplicity of notation, we do not consider biases, but a model with biases can always be expressed as {Eq.} \eqref{eq:model} by folding biases into weights and creating an input dimension that is always one. 
Several methods have been proposed to train the low-rank weight matrices $W_{l}$ to minimize a given loss function $\mathcal{L}$. Among these, the simplest training rule is the steepest descent method, which, for a fixed\footnote{We restrict the discussion to a single layer without loss of generality, following the arguments of \cite[Appendix~I]{schotthöfer2024GeoLoRAgeometricintegrationparameter}.} layer $l$ with weights $W = USV^{\top} \equiv W_l$ reads
    \begin{align}\label{eq:SD}
    U^{n+1} = U^n - \lambda \nabla_U \mathcal{L}^n \,, \quad 
    S^{n+1} = S^n - \lambda \nabla_S \mathcal{L}^n\,, \quad 
    V^{n+1} =V^n - \lambda \nabla_V \mathcal{L}^n\,,
\end{align}
where $\lambda$ is the learning rate, the index $n$ denotes the training iteration, and we have used the shorthand notation $\mathcal{L}^n := \mathcal{L}(U^nS^nV^{n,\top})$. A more general framework that facilitates the numerical analysis and construction of novel numerical methods interprets the steepest descent method as an explicit Euler time discretization of the continuous gradient flow equations
\begin{align*}
    \dot U(t) = - \nabla_U \mathcal{L}\,, \quad
    \dot S(t) = - \nabla_S \mathcal{L}\,, \quad
    \dot V(t) = - \nabla_V \mathcal{L} \,,
\end{align*}
where $\mathcal{L} := \mathcal{L}(U(t)S(t)V(t)^{\top})$. Then, the steepest descent update equation for the individual factors can be obtained through a forward Euler discretization of the pseudo-time $t$. E.g., the update equation for $U$ can be retrieved through $\dot U(t_n) \approx \frac{1}{\lambda} (U^{n+1} - U^n)$ and $\mathcal{L} \approx \mathcal{L}^n$. While steepest descent methods offer a simple strategy to drive the parameters to a locally optimal point, one of the most widely adopted strategies for updating factorized parameters $W$, along with their factorized momentum terms $\mathcal{V} = U_{\mathcal{V}} S_{\mathcal{V}} V_{\mathcal{V}}^{\top}$, involves momentum-based optimization. For instance, in the case of the heavy ball method applied to all low-rank factors individually, e.g., in LoRA \cite{hu2021lora}, the associated update equations take the form
\begin{subequations}\label{eq:heavy_ball_updates}
\begin{alignat}{2}
    U^{n+1} = U^n + \lambda\, U_{\mathcal{V}}^n,\quad U_{\mathcal{V}}^{n+1} &= (1 - \lambda\gamma)\, U_{\mathcal{V}}^n - \lambda\, \nabla_U \mathcal{L}^n,\quad &&\nabla_U \mathcal{L}^n = (\nabla_W \mathcal{L}^n) V^n (S^n)^\top, \\
    V^{n+1} = V^n + \lambda\, V_{\mathcal{V}}^n,\quad V_{\mathcal{V}}^{n+1} &= (1 - \lambda\gamma)\, V_{\mathcal{V}}^n - \lambda\, \nabla_V \mathcal{L}^n,\quad &&\nabla_V \mathcal{L}^n = (\nabla_W \mathcal{L}^n)^\top U^n S^n, \\
    S^{n+1} = S^n + \lambda\, S_{\mathcal{V}}^n,\quad S_{\mathcal{V}}^{n+1} &= (1 - \lambda\gamma)\, S_{\mathcal{V}}^n - \lambda\, \nabla_S \mathcal{L}^n,\quad &&\nabla_S \mathcal{L}^n = (U^n)^\top (\nabla_W \mathcal{L}^n) V^n.
\end{alignat}
\end{subequations}
The associated gradient flow equations are given by:
\begin{subequations}\label{eq:vanilla_grad_flow}
\begin{alignat}{2}
    \dot{U} = U_{\mathcal{V}}\,,\quad \dot{U}_{\mathcal{V}}  + \gamma U_{\mathcal{V}} + \nabla_U\mathcal{L} =\,& 0\,, \quad &&\nabla_U\mathcal{L} = (\nabla_W\mathcal{L})VS^{\top}\,, \label{eq:vanilla_grad_flow_U}\\
    \dot{V} = V_{\mathcal{V}}\,,\quad \dot{V}_{\mathcal{V}}  + \gamma V_{\mathcal{V}} + \nabla_V\mathcal{L} =\,& 0\,, \quad  &&\nabla_V\mathcal{L} = (\nabla_W\mathcal{L})^{\top}US\,, \label{eq:vanilla_grad_flow_V}\\
    \dot{S} = S_{\mathcal{V}}\,,\quad \dot{S}_{\mathcal{V}}  + \gamma S_{\mathcal{V}} + \nabla_S\mathcal{L} =\,& 0\,, \quad  &&\nabla_S\mathcal{L}= U^{\top}\nabla_W\mathcal{L}V\,,
\end{alignat}
\end{subequations}
{where $\gamma$ denotes the momentum decay parameter.} While these equations are optimal when treating all low-rank factors in isolation, they do not account for the fact that factors change simultaneously. For example, {Eq.} \eqref{eq:vanilla_grad_flow_U} is expected to drive the solution to an optimum, only if $S(t)$ and $V(t)$ remain constant in time. When accounting for the dynamics of all three low-rank factors, the resulting gradient flow for $W(t) = U(t)S(t)V(t)^{\top}$ takes the form
\begin{subequations}
    \begin{align}\label{eq:naiveUppdateFlow}
    \dot W &=\, U_{\mathcal{V}}SV^{\top} + US_{\mathcal{V}}V^{\top} + USV_{\mathcal{V}}^{\top}\,,\quad\text{and}\\
    \dot{\mathcal{V}} + 3\gamma \mathcal{V} &=\, - \nabla_U\mathcal{L}S_{\mathcal{V}}V_{\mathcal{V}}^{\top} - U_{\mathcal{V}}\nabla_S\mathcal{L}V_{\mathcal{V}}^{\top} - U_{\mathcal{V}} S_{\mathcal{V}}\nabla_V\mathcal{L}^{\top}\,\nonumber\\
    &=\, - (\nabla_W\mathcal{L})VS^{\top} S_{\mathcal{V}}V_{\mathcal{V}}^{\top} - U_{\mathcal{V}}U^{\top}\nabla_W\mathcal{L}VV_{\mathcal{V}}^{\top} - U_{\mathcal{V}} S_{\mathcal{V}}S^{\top}U^{\top}\nabla_W\mathcal{L}\,\nonumber\\
    &=:\, -\widehat P(W, \mathcal{V}) \nabla_W\mathcal{L}\,. \label{eq:naiveUppdateFlowMomentum}
\end{align}
\end{subequations}
This can easily be shown with the product rule, e.g., $\dot W = \dot USV^{\top} + U\dot SV^{\top} + US\dot V^{\top}$ and plugging in time derivatives from {Eq.} \eqref{eq:vanilla_grad_flow}. These evolution equations for $W$ and $\mathcal{V}$ are fundamentally different from the momentum-based gradient flow equations of the full-rank problem
\begin{align}\label{eq:grad_flow_full}
    \dot W_{\mathrm{full}} = \mathcal{V}_{\mathrm{full}} \,,\quad \dot{\mathcal{V}}_{\mathrm{full}}   + \gamma \mathcal{V}_{\mathrm{full}}  =- \nabla_W\mathcal{L}(W_{\mathrm{full}})\,.
\end{align}
Indeed, a proper formulation of {Eq.} \eqref{eq:grad_flow_full} that drives the weights of a heavy ball method to a local optimum while preserving the low-rank representation of $W$, requires that 
\begin{align}\label{eq:gradflowopt}
    \dot W  = P(W)\mathcal{V} \,,\quad
    \dot{\mathcal{V}}+\gamma \mathcal{V} = - P(W)\nabla_W\mathcal{L} \,,
\end{align}
\begin{wrapfigure}{R}{0.4\textwidth}
\vspace{-1.8em}
    \centering
    \includegraphics[width=\linewidth]{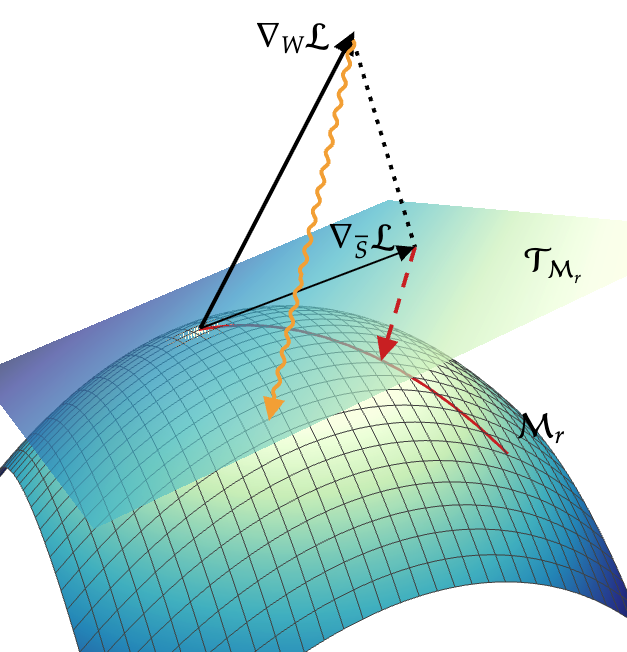}
      \caption{Geometric interpretation of \Cref{alg:heavy_ball_dlrt}. We compute the parametrization of the tangent plane $\mathcal{T}_{\mathcal{M}_r}$. Then, we compute the projected gradient  $\nabla_{\bar{S}}\mathcal{L}$ to construct the low-rank momentum update. The momentum optimizer is then applied to the low-rank weight coefficient $\widehat{S}$.
      Lastly, we retract the updated coefficients back onto the manifold $\mathcal{M}_r$. The interpretation of \Cref{alg:adam_dlrt} is analogous.      
      LoRA-like methods do not employ orthogonal projections onto $\mathcal{T}_{\mathcal{M}_r}$, but instead map the full gradient $\nabla_W\mathcal{L}$ implicitly onto $\mathcal{M}_r$. The linear map (displayed as the wavy orange line) may map the gradient direction far away from the properly projected gradient flow, leading to suboptimal descent directions.}
    \label{fig_manifold}
    \vspace{-5em}
\end{wrapfigure}
where for $W = USV^{\top}$, the projector onto the tangent space is given by $P(W)Z := U U^{\top}Z(I-VV^{\top}) + ZV V^{\top}$, see \cite[Lemma~4.1]{koch2007dynamical} and \Cref{fig_manifold} for geometric interpretation. Note that in abuse of notation, we have recycled $W$ and $\mathcal{V}$ here to denote the weights and momentum terms following {Eq.} \eqref{eq:gradflowopt} instead of the naive {Eq.} \eqref{eq:vanilla_grad_flow}.  Then, the time evolution will drive $W$ into a low-rank steady state $(W^{\star}, \mathcal{V}^{\star})$ such that $P(W^{\star})\nabla_W\mathcal{L}(W^{\star}) = 0$, see Theorem~\ref{th:lr_conv}. This steady state thus fulfills the optimality condition of a local optimum, see, e.g. \cite[Theorem~3.4]{sato2021riemannian}. Such a condition is not ensured by the simultaneous descent equations of \eqref{eq:vanilla_grad_flow} since, in general, $\dot W \neq P(W)\mathcal{V}$ and $\widehat P(W, \mathcal{V})\nabla_W\mathcal{L} \neq P(W)\nabla_W\mathcal{L}${, see Theorem~\ref{th:failedConv}}. Therefore, training low-rank factors with conventional momentum methods does not necessarily ensure convergence to a low-rank optimum. 

We aim to derive a numerical method that is consistent with the optimal gradient-flow equations \eqref{eq:gradflowopt}. A central limitation of {Eq.} \eqref{eq:gradflowopt} is that it does not preserve the low-rank structure of the momentum term $\mathcal{V}$, thus leading to prohibitive computational costs and memory requirements. Instead, we aim to derive a method that fulfills 
\begin{align}\label{eq:momentum_efficient}
   \dot{\mathcal{V}} +\gamma \mathcal{V}=  - P(\mathcal{V})\nabla_W\mathcal{L} \,
\end{align}
which preserves the low-rank structure of the momentum term. Indeed, the factorized solution of {Eq.} \eqref{eq:momentum_efficient} fulfills (see Theorem~\ref{th:dlra_conv})
\begin{subequations}\label{eq:grad_flow_dlr}
\begin{alignat}{2}
    \dot{U}_{\mathcal{V}}  =\,& - (I-U_{\mathcal{V}}U_{\mathcal{V}}^{\top})\nabla_W\mathcal{L}V_{\mathcal{V}}S_{\mathcal{V}}^{-1}\,, \\
    \dot{V}_{\mathcal{V}}  =\,& - (I-V_{\mathcal{V}}V_{\mathcal{V}}^{\top})\nabla_W\mathcal{L}^{\top}U_{\mathcal{V}}S_{\mathcal{V}}^{-\top}\,, \\
    \dot{S}_{\mathcal{V}}  =\,& -\gamma S_{\mathcal{V}} - U_{\mathcal{V}}^{\top}\nabla_W\mathcal{L}V_{\mathcal{V}} \,\,,\label{eq:grad_flow_dlr_S}
\end{alignat}
\end{subequations}
with initial condition $U(0) = U_{\mathcal{V}}(0)$ and $V(0) = V_{\mathcal{V}}(0)$. While this formulation and in particular {Eq.} \eqref{eq:gradflowopt} provide a good basis for constructing numerical methods, it also introduces the inverse terms $S_{\mathcal{V}}^{-1}$ and $S_{\mathcal{V}}^{-\top}$ on the right-hand side, rendering the system highly stiff, especially when these matrices are ill-conditioned. This stiffness can be treated through robust time integrators \cite{ceruti2021rank,ceruti2024parallel} developed in the field of dynamical low-rank approximation \cite{koch2007dynamical} which have also been used for stochastic-gradient descent methods in dynamical low-rank training \cite{schotthofer2022low,zangrando2023rank,schotthöfer2024federateddynamicallowranktraining,schotthöfer2024GeoLoRAgeometricintegrationparameter,Hnatiuk,schotthöfer2025dynamicallowrankcompressionneural}.  In the following section, we formulate an algorithm that provably approximates the gradient flow of {Eq.} \eqref{eq:grad_flow_dlr} by following ideas of \cite{ceruti2021rank}. It turns out that this method is a consistent approximation of the optimal gradient flow of {Eq.} \eqref{eq:gradflowopt} under mild assumptions.

\section{A low-rank heavy ball method}\label{sec:DLRT-heavy-ball}
To approximate {Eq.} \eqref{eq:gradflowopt}, we start with the first time step from $t_0 = 0$ to $t_1 = \lambda$ and note that by definition of the initial condition $U^0 := U(t_0) = U_{\mathcal{V}}(t_0)$ and $V^0 := V(t_0) = V_{\mathcal{V}}(t_0)$. Let us first construct an augmented basis to ensure that the range and co-range of $\mathcal{V}$ are fully spanned. To ensure robustness to small singular values, we introduce a change of variables and evolve the basis along $K_{\mathcal{V}} (t) = U_{\mathcal{V}}(t)S_{\mathcal{V}}(t)$ and $L_{\mathcal{V}} (t) = V_{\mathcal{V}}(t)S_{\mathcal{V}}(t)^{\top}$ for $t\in[t_{0}, t_{1}]$ while keeping $V^{0} = V_{\mathcal{V}}(t_0)$ and $U^{0} = U_{\mathcal{V}}(t_0)$ fixed, respectively \cite{ceruti2021rank}. Then, using the product rule and derivatives from {Eq.} \eqref{eq:grad_flow_dlr}, we get
\begin{alignat*}{2}
    \dot{K}_{\mathcal{V}} (t) = \dot{U}_{\mathcal{V}} (t)S_{\mathcal{V}}(t) + U_{\mathcal{V}}(t)\dot S_{\mathcal{V}}(t) \stackrel{\eqref{eq:grad_flow_dlr}}{=}\,& - \nabla_W\mathcal{L}(W(t))V^{0}\,,\quad &&K_{\mathcal{V}} (t_{0}) = U^{0}S^{0}_v\,, \\
    \dot{L}_{\mathcal{V}} (t) =\dot{V}_{\mathcal{V}} (t)S_{\mathcal{V}}(t)^{\top} + V_{\mathcal{V}}(t)\dot S_{\mathcal{V}}(t)^{\top} \stackrel{\eqref{eq:grad_flow_dlr}}{=}\,& -\nabla_W\mathcal{L}(W(t))^{\top}U^{0}\,,\quad &&L_{\mathcal{V}} (t_{0}) = V^{0}S^{0,\top}_v\,.
\end{alignat*}
\begin{algorithm}[t]
\DontPrintSemicolon
%\label{alg:new_NN_KLS}
\SetAlgoLined
\SetKwInOut{Input}{Input}
\SetKwComment{Comment}{$\triangleright$\ }{}

\Input{Initial orthonormal bases $U,V\in\mathbb{R}^{n\times r}$ and coefficients $S, S_{\mathcal{V}}\in\mathbb{R}^{r\times r}$;\;
$\tau$: singular value threshold for rank truncation;\;
$\lambda$: learning rate.
}
Evaluate $\mathcal{L}(USV^\top)$\tcc*{Forward evaluate}
$G_U\gets\nabla_{U}\mathcal{L}(USV^\top);\,G_{\mathcal{V}}\gets\nabla_{V}\mathcal{L}(USV^\top)$ \tcc*{Backprop}

%$K^\text{new}\gets${\tt optimizer\_step}$(US, G_U  S^{\textup{inv},\top}, \lambda)$\tcc*{in parallel}
%$L^\text{new}\gets${\tt optimizer\_step}$(VS^\top, G_{\mathcal{V}} S^{\textup{inv}}, \lambda)$\tcc*{in parallel}
%$S^\text{new}\gets${\tt optimizer\_step}$(S, G_S, \lambda)$\tcc*{in parallel}
$\left\{
\begin{array}{l}
\widehat U\gets\texttt{  basis\_augmentation}(U, G_U) \\
\widehat V\gets\texttt{  basis\_augmentation}(V, G_{\mathcal{V}})
\end{array}
\right.${\tcc*{in parallel}}

$\bar S \gets \widehat U^{\top}USV^\top\widehat V; \bar S_{\mathcal{V}} \gets \widehat U^{\top}US_{\mathcal{V}} V^\top\widehat V$

Evaluate $\mathcal{L}(\widehat U \bar S \widehat V^\top)$\tcc*{Forward evaluate}
$G_S\gets\nabla_{\bar S}\mathcal{L}(\widehat U \bar S \widehat V^\top)$ \tcc*{Backprop}
$\widehat S_{\mathcal{V}} \gets (1-\gamma)\bar S_{\mathcal{V}} - \lambda G_S;\; \widehat S \gets \bar S + \lambda \widehat S_{\mathcal{V}}
$\tcc*{coefficient update} 
$U,S,V, S_{\mathcal{V}} \gets ${\tt truncation}$(\widehat S, \widehat S_{\mathcal{V}}, \widehat U, \widehat V; \tau  )$

\caption{Single iteration of the dynamical low-rank momentum method. \\ The functions \texttt{basis\_augmentation}, and \texttt{truncation} are detailed in \Cref{alg_helper} in the appendix. }\label{alg:heavy_ball_dlrt}
\end{algorithm}
As no ill-conditioned $S_{\mathcal{V}}^{-1}$ terms affect the dynamics, one can use a forward Euler step to update $K_{\mathcal{V}} $ and $L_{\mathcal{V}} $ from $t_{0}$ to the next time step $t_{1}$. Thus, we get for $K_{\mathcal{V}} ^n \approx K_{\mathcal{V}} (t_n)$, $L_{\mathcal{V}} ^n \approx L_{\mathcal{V}} (t_n)$, and $W^n = W(t_n)$
\begin{alignat*}{2}
    K_{\mathcal{V}} ^{1} =\,& K_{\mathcal{V}} ^0 - \lambda\nabla_W\mathcal{L}(W^0)V^{0}\,,\quad && \text{ with }K_{\mathcal{V}} ^0 = U^{0}S^{0}_v\,, \\
    L_{\mathcal{V}} ^{1} =\,& L_{\mathcal{V}} ^{0} - \lambda\nabla_W\mathcal{L}(W^0)^{\top}U^{0}\,,\quad && \text{ with } L_{\mathcal{V}} ^0 = V^{0}S^{0,\top}_v\,.
\end{alignat*}
Denoting an orthonormalization algorithm like Gram-Schmidt as $\text{ortho}$, $K_{\mathcal{V}} ^{1}$ and $L_{\mathcal{V}} ^{1}$ are therefore spanned by
\begin{align*}
    \widehat U = \text{ortho}(U^{0}, \nabla_W \mathcal{L}(W^{0}) V^{0} )\,,\quad \widehat V = \text{ortho}(V^{0}, (\nabla_W \mathcal{L}(W^{0}))^{\top} U^{0} )\,.
\end{align*}
Following \cite[Cor.~2.2]{NEURIPS2024_ea48cb23}, we note that $\nabla_U \mathcal{L} = \nabla_W \mathcal{L}VS^{\top}$ and $\nabla_V \mathcal{L} = (\nabla_W \mathcal{L})^{\top}US$, meaning that $\widehat U$ and $\widehat V$ can be rewritten as
\begin{align*}
    \widehat U = \text{ortho}(U^{0}, \nabla_U \mathcal{L}(W^{0}))\,,\quad \widehat V = \text{ortho}(V^{0}, \nabla_V \mathcal{L}(W^{0})) \,.
\end{align*}
We note here that this choice of the updated bases for $\mathcal{V}$ also approximates the updated parameters $W(t_1)$ of the equation $\dot W = \mathcal{V}$. With an implicit Euler time discretization, we have
\begin{align*}
    W^1 = U^0S^0V^{0,\top} + \lambda \widehat U \widehat S_{\mathcal{V}}^1 \widehat V^{\top} = \widehat U ( \widehat U^{\top} U^0S^0V^{0,\top}\widehat V + \lambda  \widehat S_{\mathcal{V}}^1) \widehat V^{\top} =: \widehat U \widehat S^1 \widehat V^{\top}\,,
\end{align*}
where $\widehat S^1 := \widehat U^{\top} U^0S^0V^{0,\top}\widehat V + \lambda  \widehat S_{\mathcal{V}}^1$ and $\widehat S_{\mathcal{V}}^1$ is the time-updated coefficient matrix of $\mathcal{V}$ which we will derive in the following: Solving {Eq.} \eqref{eq:grad_flow_dlr_S} with fixed bases $\widehat U$ and $\widehat V$ yields
\begin{align*}
    \dot{S}_{\mathcal{V}} (t) = -\gamma S_{\mathcal{V}}(t) - \widehat U^{\top}\nabla_W\mathcal{L}(W(t))\widehat V\, \qquad S_{\mathcal{V}}(t_{0}) = \widehat U^{\top} U^{0}S_{\mathcal{V}}^{0}V^{0,\top}\widehat V\,.
\end{align*}
Here, we choose $S_{\mathcal{V}}(t_{0})$ as the coefficient matrix $S_{\mathcal{V}}^{0}$ projected to the updated bases $\widehat U$ and $\widehat V$. This choice is crucial as it ensures the momentum term to be spanned with the updated basis. Using a forward Euler time discretization yields
\begin{alignat*}{2}
    \widehat S_{\mathcal{V}}^{1} =\,& (1-\gamma)\widehat U^{\top}U^{0}S_{\mathcal{V}}^{0}V^{0,\top}\widehat V - \lambda\widehat U^{\top}\nabla_W \mathcal{L}(U^{0}S^{0}V^{0,\top})\widehat V\,.
\end{alignat*}

Let us note that with $\bar S := \widehat U^{\top} U^{0}S^{0}V^{0,\top}\widehat V$ we have
\begin{align*}
    \widehat U^{\top}\nabla_W \mathcal{L}(U^{0}S^{0}V^{0,\top})\widehat V = \nabla_{\bar S} \mathcal{L}(\widehat U\bar S\widehat V^{\top})\,.
\end{align*}
Thus, with $\bar S_{\mathcal{V}} := \widehat U^{\top} U^{0}S_{\mathcal{V}}^{0}V^{0,\top}\widehat V$, the final coefficient updates (including the update for $S$) are  
\begin{alignat*}{2}
    \widehat S_{\mathcal{V}}^{1} =\,& (1-\gamma)\bar S_{\mathcal{V}} - \lambda\nabla_{\bar S} \mathcal{L}(\widehat U\bar S\widehat V^{\top})\,\\
    \widehat S^{1} =\,& \bar S + \lambda\widehat S_{\mathcal{V}}^{1}\,.
\end{alignat*}
We note that the basis for $W$ and $\mathcal{V}$ remain identical after one time update, thus the above derivation holds for general time updates from $t_n$ to $t_{n+1}$. Since, by construction, the updated bases $\widehat U$ and $\widehat V$ have doubled in rank compared to $U^{n}$ and $V^{n}$, we perform a truncation step. The truncation can be performed back to the original rank $r$, or formulated with a relative truncation threshold for a given tolerance parameter $\vartheta=\tau \|\augS\|$ \cite{ceruti2021rank} or a rank budget \cite{zhang2023adalora}, enabling a rank adaptive method.

One step of the resulting method using a relative truncation threshold is summarized in Algorithm~\ref{alg:heavy_ball_dlrt}. A main distinction of this method from a naive application of a heavy ball method to DLRT \cite{schotthofer2022low} is that 1) our method uses the same bases for parameters and momentum terms, 2) our method does not use a momentum method to update the bases, but uses a classical basis augmentation instead, and 3) the method projects momentum terms onto the new bases after the basis update. These three choices ensure that the method approximates the optimal gradient flow of {Eq.} \eqref{eq:gradflowopt} independent of the condition number of $S^{-1}$ and $S_{\mathcal{V}}^{-1}$ when the truncation tolerance is sufficiently small, which we make rigorous in 
\Cref{th:robust_error_bound}.

\Cref{th:robust_error_bound} shows that the low-rank momentum method produces solutions that are close to the solutions of full-rank (baseline) trained neural networks. Thus,  we expect the validation accuracy of the trained low-rank networks to match the full-rank baseline. This is empirically confirmed in \Cref{tab_lr_pretrain_summary}.

\section{A low-rank Adam method}\label{sec:low-rank-adam}

While heavy ball methods are the cornerstone of momentum-based optimization, they are commonly outperformed by momentum-based optimization methods that include stepsize control. Among these, perhaps the most popular method is Adaptive Moment Estimation (Adam), which was introduced in \cite{adam} and has significantly impacted the machine learning community; see \Cref{alg_adam} for a reference formulation of Adam.  While Adam exhibits superior performance, the non-linearities it introduces and a missing gradient flow formulation make a rigorous derivation of an extension to LoRA-type training difficult. In the following, we use the insights gained from the heavy ball method to construct a low-rank Adam optimizer. Adam's main distinction from heavy ball methods is the adaptive stepsize control, which is determined from the exponentially weighted moving average of the 
first and second moment of the gradient, denoted by {$\mathcal{V}$} and {$\mathcal{K}$}. 
\begin{algorithm}[t]
\DontPrintSemicolon
\SetAlgoLined
\SetKwInOut{Input}{Input}
\SetKwComment{Comment}{$\triangleright$\ }{}

\Input{Initial orthonormal bases $U,V\in\mathbb{R}^{n\times r}$ and coefficients $S, S_{\mathcal{V}}, S_{\mathcal{K}}\in\mathbb{R}^{r\times r}$;\;
$\tau$: singular value threshold for rank truncation;\;
$\lambda$: learning rate;\;
$\beta_1, \beta_2$: Adam momentum parameters;\;
$\epsilon$: Small stability constant.
}

%Initialize moment estimates: $m_S \gets 0, v_S \gets 0$ \;
Evaluate $\mathcal{L}(USV^\top)$\tcc*{Forward evaluate}
$G_U\gets\nabla_{U}\mathcal{L}(USV^\top);\,G_V\gets\nabla_{V}\mathcal{L}(USV^\top)$ \tcc*{Backprop}

$\left\{
\begin{array}{l}
\widehat U\gets\texttt{  basis\_augmentation}(U, G_U) \\
\widehat V\gets\texttt{  basis\_augmentation}(V, G_V)
\end{array}
\right.${\tcc*{in parallel}}

$\bar S \gets \widehat U^{\top}USV^\top\widehat V$, 
$\bar S_{\mathcal{V}} \gets \widehat U^{\top}US_{\mathcal{V}} V^\top\widehat V$, 
$\bar{S}_{\mathcal{K}}\gets \left(\widehat{U}^{\top}U\sqrt{S_{\mathcal{K}}}V^{\top}\widehat V\right)^2$\\

Evaluate $\mathcal{L}(\widehat U \bar S \widehat V^\top)$\tcc*{Forward evaluate}
$G_S\gets\nabla_{\bar S}\mathcal{L}(\widehat U \bar S \widehat V^\top)$ \tcc*{Backprop}
$\widehat S_{\mathcal{V}} \gets \beta_1 \bar S_{\mathcal{V}} + (1-\beta_1) G_S$\;
  $\widehat S_{\mathcal{K}}\gets \beta_2\bar S_{\mathcal{K}} + (1-\beta_2)\left(G_S\right)^2$\;
  \Comment{Modifications for adaptive update}
$\check{S}_{\mathcal{V}}  \gets\frac{ \widehat S_{\mathcal{V}}^{n} }{1 - \beta_1^n},\, \check{ S}_{\mathcal{K}}  \gets \frac{ \widehat S_{\mathcal{K}}^{n}}{1 - \beta_2^n}\,$ \tcc*{Bias correction}
$ \widehat S^{1} \gets \bar S - \lambda \frac{\check{ S}_{\mathcal{V}}}{\sqrt{\check{S}_{\mathcal{K}} + \epsilon}}$ \tcc*{Adaptive coefficient update}
$U,S,V,   S_{\mathcal{V}},  S_{\mathcal{K}} \gets ${\tt truncation}$(\widehat S, \widehat S_{\mathcal{V}}, \widehat S_{\mathcal{K}}, \widehat U, \widehat V; \tau  )$
\caption{Single iteration of the low-rank Adam method. \\ The functions \texttt{basis\_augmentation} and \texttt{truncation} are detailed in \ref{alg_helper} in the appendix. }\label{alg:adam_dlrt}
\end{algorithm}

A naive update of the exponentially weighted moving average of the first and second moment of the gradients with respect to the coefficients $S$, which we denote as $\mathcal{V}_S,\mathcal{K}_S\in\mathbb{R}^{2r\times 2r}$ would read as
\begin{align*}
\mathcal{V}_S^{n+1} = \beta_1 \mathcal{V}_S^{n} + (1-\beta_1)\nabla_S \mathcal{L}\qquad\text{and}\qquad
\mathcal{K}_S^{n+1} = \beta_2 \mathcal{K}_S^{n} + (1-\beta_2)(\nabla_S \mathcal{L})^2\,.
\end{align*}
The bases $U,V$ of the weight factorization could be updated as in \Cref{alg:heavy_ball_dlrt}.
However, this naive approach does not account for the fact that $\mathcal{K}_S^{n}$ and $\mathcal{K}_S^{n+1}$, respectively $\mathcal{V}_S^{n}$ and $\mathcal{V}_S^{n+1}$ belong to different bases, which are updated between optimization steps by the augmentation-truncation mechanic. 

Instead, we leverage the discussion in Section~\ref{sec:DLRT-heavy-ball}, and propose to project the previous weighted moving average to the updated bases. 
We note that through the construction of the low-rank bases, no step-size control is required to update $U$ and $V$ since the basis spans weights and moments at the old and current time steps and linear combinations in between.
To facilitate the discussion, we denote the first low-rank moment as $S_\mathcal{V}\in\mathbb{R}^{r\times r}$, analogously to the momentum term in \Cref{alg:heavy_ball_dlrt} and note that the update of $S_\mathcal{V}$ can be performed using the strategy of  \Cref{alg:heavy_ball_dlrt}
with $\bar S_{\mathcal{V}} := \widehat U^{\top} U^{0}S_{\mathcal{V}}^{0}V^{0,\top}\widehat V$ and $\bar S := \widehat U^{\top} U^{0}S^{0}V^{0,\top}\widehat V$, i.e.,
\begin{alignat*}{2}
    \widehat S_{\mathcal{V}}^{1} =\,& \beta_1\bar S_{\mathcal{V}}  +(1-\beta_1)\nabla_{\bar S} \mathcal{L}(\widehat U\bar S\widehat V^{\top})\,
\end{alignat*}
The dynamics of the second moment do not follow the gradient flow directly, but have non-linear dynamics depending on the square of the gradient. 
We remark that, the (full-rank) second moment $\hat{\mathcal{K}}$ is element-wise positive, which is important for taking the square root in the (full-rank) Adam update step  $W^{n+1} = W^n - \lambda \frac{\hat{\mathcal{V}}}{\sqrt{\hat{\mathcal{K}}} + \epsilon}$.  
Denoting the second moment of the coefficient matrix by $S_\mathcal{K}\in\mathbb{R}^{r\times r}$, we propose a modified projection 
$
\bar{S}_{\mathcal{K}} = \left(\widehat{U}^{\top}U^{0}\sqrt{S_{\mathcal{K}}^0}V^{0,\top}\widehat V\right)^2,
$
which projects the element-wise square root of the second moment of the low-rank coefficient $S$ onto the new basis and subsequently squares the elements. This ensures element-wise positivity of the second low-rank moment. The augmented low-rank moment $\bar{S}_{\mathcal{K}}$ is subsequently updated with the standard Adam update scheme
\begin{alignat*}{2}
    \widehat S_{\mathcal{K}}^{1} =\,& \beta_2\bar S_{\mathcal{K}} + (1-\beta_2)\left(\nabla_{\bar S} \mathcal{L}(\widehat U\bar S\widehat V^{\top})\right)^2.
\end{alignat*}
We apply the Adam bias correction at iteration $n$ to the low-rank moments, i.e., 
\begin{align*}
\check{S}_{\mathcal{V}}  =\frac{ \widehat S_{\mathcal{V}}^{n} }{1 - \beta_1^n}, \quad \textup{and} \quad \check{ S}_{\mathcal{K}} = \frac{ \widehat S_{\mathcal{K}}^{n}}{1 - \beta_2^n}\,,
\end{align*}
and update steps for the coefficient $S$, i.e., $
 \widehat S^{1} =\, \bar S  - \lambda \frac{\check{ S}_{\mathcal{V}}}{\sqrt{\check{S}_{\mathcal{K}} + \epsilon}}$.

The resulting method is summarized in Algorithm~\ref{alg:adam_dlrt}. We wish to remark that the extension to AdamW is straightforward: The regularization term is added to $\widehat{S}$ directly instead of being combined with the gradient for the moment updates.

\subsection*{Computational and Memory Efficiency}

We briefly analyze the computational and memory efficiency of  
1) the low-rank Heavy Ball method and  
2) the low-rank Adam method.
To update a matrix $W \in \mathbb{R}^{n \times n}$, the full-rank (baseline) Heavy Ball method requires $\mathcal{O}(3n^2)$ floats to store the weight $W$, its gradient $\nabla_W \mathcal{L}$, and the momentum $\mathcal{V}$. The computational cost is of the same order.
In contrast, the low-rank Heavy Ball method (see \Cref{alg:heavy_ball_dlrt}) requires $\mathcal{O}(2nr + r^2)$ floats to store the low-rank factorization $USV^\top$, and $\mathcal{O}(2nr + r^2)$ for the gradients $\nabla_U \mathcal{L}, \nabla_V \mathcal{L}, \nabla_S \mathcal{L}$. Since the bases $U$ and $V$ are shared between the weight and momentum, the momentum term requires only $\mathcal{O}(r^2)$ additional floats. The extra computational costs are $\mathcal{O}(nr^2)$ for orthonormalization during basis augmentation and $\mathcal{O}(r^3)$ for truncation, both negligible when $r \ll n$.
Similarly, the full-rank Adam/AdamW method requires $\mathcal{O}(4n^2)$ floats to store $W$, $\nabla_W \mathcal{L}$, and the two momentum terms $\mathcal{V}, \mathcal{K}$, with comparable compute cost. The low-rank Adam method uses $\mathcal{O}(2nr + r^2)$ for $USV^\top$ and the same for the gradients. The two momentum terms add only $\mathcal{O}(2r^2)$ due to shared bases. The computational cost is computed analogously to that of the low-rank Heavy Ball method.

\subsection*{Extension to Tensor-valued layers}
{
The proposed optimizer is directly extendable to tensor-valued neural network layers, e.g. convolutional layers, following the extension of the SGD-based DLRT method from matrices in \cite{schotthofer2022low} to tensors \cite{zangrando2023rank}.} To that end, we remark that a, e.g. 2d, convolution can be formulated as an operation on an order four tensor $\bm{W}\in\mathbb{R}^{d_i\times d_o\times s_w\times s_h}$, where $d_i$ and $d_o$ are the channels of the input and output data, and $s_w, s_h$ is the width and height of the sliding window of the convolution acting on the spatial dimensions $s,h$ of the input image $x\in\mathbb{R}^{s\times h\times d_i}$.
Consider a low-rank Tucker factorization of the weight, i.e. $\bm{W}=\bm{C}\times_{i=1}^4 U_i$, with core tensor $\bm{C}\in\mathbb{R}^{r_1\times r_2\times r_3\times r_4}$ and $U_i\in\mathbb{R}^{n_i\times r_i}$ with $n_i\in\{d_i, d_o, s_w,s_h\}$. 
Using the extension of the DLRT method to Tucker tensors \cite{NEURIPS2024_ea48cb23} and applying \Cref{alg:heavy_ball_dlrt}, respectively \Cref{alg:adam_dlrt} to the tensor update yields the desired method.

\section{Numerical Results}\label{sec:num_res}
\begin{wraptable}{r}{0.6\textwidth}
\centering
\vspace{-1.5cm}
\caption{UCM, Cifar10 and Cifar100 benchmark; Low-rank compressed transfer learning. Accuracy means and std.~devs.~of 10 stochastic trainings using AdamW. The LoRA ranks are set up to match the compression rate of the results of \Cref{alg:adam_dlrt}.  \Cref{alg:adam_dlrt} achieves higher accuracy at higher compression rates across all benchmarks, compared to DLRT\cite{schotthofer2022low} w/o momentum term projection and LoRA-based pretraining w/ momentum terms. }
\label{tab_lr_pretrain_summary}
\resizebox{0.6\textwidth}{!}{
\begin{tabular}{@{}llcccccc@{}}
\toprule
  & &  \multicolumn{2}{c}{\textbf{UCM Data}} & \multicolumn{2}{c}{\textbf{Cifar10 Data}} & \multicolumn{2}{c}{\textbf{Cifar100 Data}} \\
 \cmidrule(lr){3-4} \cmidrule(lr){5-6} \cmidrule(lr){7-8}
 &  & \textbf{Acc [\%]} & \textbf{c.r. [\%]}& \textbf{Acc [\%]} & \textbf{c.r. [\%]} & \textbf{Acc [\%]} & \textbf{c.r. [\%]} \\
 \midrule
\multirow{4}{*}{\rotatebox[origin=c]{90}{{VGG16}}}
& Baseline & 94.40$\pm$0.72 & 0.0& \textbf{89.82$\pm$0.45} & 0.0 & \textbf{65.21$\pm$0.37} & 0.0 \\
& \Cref{alg:adam_dlrt} &  \textbf{94.61$\pm$0.35} &95.84 & 89.49$\pm$0.58 &95.30  & 64.58$\pm$0.46 &95.54\\
& DLRT w/o proj. & 89.32$\pm$0.93 & 93.56 &85.01$\pm$0.28 & 94.84  &60.48$\pm$0.27 & 98.71\\
& LoRA pretrain & 90.64$\pm$2.27 & 93.57 &88.43$\pm$ 0.23 & 94.80& 61.63$\pm$0.46 &96.58  \\
\midrule
\multirow{4}{*}{\rotatebox[origin=c]{90}{{VGG11}}}
& Baseline & \textbf{94.23$\pm$0.71} & 0.0 & \textbf{88.34$\pm$0.49} &0.0 & \textbf{63.13$\pm$0.41} & 0.0\\
& \Cref{alg:adam_dlrt} &93.70$\pm$0.71 & 94.89 &88.13$\pm$0.56 & 95.13 &60.84$\pm$0.40 & 95.08 \\
& DLRT w/o proj. & 88.23$\pm$0.90  & 90.35 & 81.98$\pm$0.25 & 97.08 & 61.59$\pm$0.25 & 95.99 \\
& LoRA pretrain & 90.14$\pm$2.56 &94.72 &  86.63$\pm$0.29 & 94.57&59.54$\pm$0.40 & 94.78 \\
\midrule
\multirow{4}{*}{\rotatebox[origin=c]{90}{{ViT-B.16}}}
& Baseline & \textbf{96.72$\pm$0.36}  &0.0& \textbf{95.42$\pm$0.35} & 0.0 & \textbf{90.34$\pm$0.44} & 0.0\\
& \Cref{alg:adam_dlrt} & 96.38$\pm$0.60 & 86.7  &95.39$\pm$0.41   &83.42   &88.48$\pm$0.53 & 75.38\\
& DLRT w/o proj. & 78.94$\pm$0.50 &84.91&  91.95$\pm$0.50 &84.95&75.09$\pm$0.53 & 75.83\\
& LoRA pretrain &  86.54$\pm$2.91 & 84.94 & 94.10$\pm$0.56 &80.78 &76.76$\pm$0.53 & 74.86\\
\bottomrule
\end{tabular}
}
\vspace{-0.5cm}
\end{wraptable}

In the following, we showcase numerical experiments for low-rank transfer learning (\Cref{subsec_transferlearning}), finetuning (\Cref{subsec_peft}), and pre-training tasks (\Cref{subsec_pretrain}). Details on training, data and additional experiments are listed in \Cref{app_experiments}.

Our primary baseline is \emph{full-rank training}, where all model parameters are updated without any structural constraints. We compare this against 1) the low-rank finetuning strategy introduced in \Cref{alg:adam_dlrt} 2) naive application of DLRT without the projection of the momentum terms, and 3) simultaneous direct application of the Adam optimizer on the low-rank factors $U,S,V$, as typically done in 
{LoRA}~\cite{hu2021lora}, a low-rank adaptation technique originally designed for parameter-efficient finetuning of large transformers. 
For LoRA-based experiments, we calibrate the per-layer rank hyperparameters to match the overall \emph{compression ratio} achieved by \Cref{alg:adam_dlrt}, ensuring a fair comparison at fixed parameter budgets. The compression ratio is defined as:
$\textup{c.r.} = \left(1 - \frac{\#\textup{params low-rank model}}{\#\textup{params baseline model}} \right) \times 100.$

\subsection{Low-Rank Compressed Transfer Learning}\label{subsec_transferlearning}
\paragraph{VGG16, VGG11, and ViT-B.16 on UCM/CIFAR-10/CIFAR-100}
We evaluate performance across three network architectures, VGG16 and VGG11, and the ViT-B.16 Vision Transformer on UCM/CIFAR-10/CIFAR-100.
The convolutional VGG11 and VGG16 networks are selected to validate the performance of \Cref{alg:adam_dlrt} on tensor-valued layers, here given by the convolutional layers. We use the low-rank Tucker tensor format to compress and train the convolutions; for details, we refer to \cite{schotthöfer2025dynamicallowrankcompressionneural, NEURIPS2024_ea48cb23}.

\begin{table}[b]
\centering
\caption{DeBERTaV3-base finetuning on GLUE. We compare with full finetuning (Full FT), Houlsby adapter \citep{pmlr-v97-houlsby19a} (HAdapter),  Pfeiffer adapter \citep{pfeiffer2021adapterfusionnondestructivetaskcomposition} (PAdapter), LoRA \citep{hu2021lora}, AdaLoRA \citep{zhang2023adalora}, GeoLoRA\citep{schotthöfer2024GeoLoRAgeometricintegrationparameter}, DoRA \citep{Mao2024DoRAEP}, LoRA+\citep{hayou2024lora}, and Bitfit\citep{zaken2022bitfitsimpleparameterefficientfinetuning}. We report target metrics and computational performance (higher is better) for the median of 5 runs using different random seeds. Best results per dataset are shown
in bold. Results for BitFit, HAdapter, and PAdapter were taken from \citep{zhang2023adalora}. "AdaLoRa matched" has the rank budget adapted to approximately match the final parameter count of \Cref{alg:adam_dlrt}.}
\label{tab_results}
\resizebox{\textwidth}{!}
{%
\begin{tabular}{lccccccccc}
\toprule
\textbf{Method} (\textcolor{blue}{\# Params})& \textbf{ SST-2}  & \textbf{CoLA} & \textbf{QQP}  & \textbf{QNLI}  & \textbf{RTE}  & \textbf{MRPC}  & \textbf{STS-B} & \textbf{Mean}  \\
  &(Acc) &(Mcc) &(F1)&(Acc)&(Acc)&(Acc)&(Corr)\\
\midrule
Full FT  (\textcolor{blue}{184M})  & 95.63 & 69.19 & 89.80 & 94.03 & 83.75 & 89.46 & 91.60  &87.63 \\
\midrule
BitFit     (\textcolor{blue}{0.1M})  & 94.84 & 66.96 & 84.95 & 92.24 & 78.70 & 87.75 & 91.35 &85.25 \\
HAdapter  (\textcolor{blue}{1.22M})   & 95.53 & 68.64 & 89.27 & 94.11 & 84.48 & 89.95 & 91.48 &87.63 \\
PAdapter   (\textcolor{blue}{1.18M})   & 95.61 & 68.77 & 89.40 & \textbf{94.29} & 85.20 & 89.46 & 91.54 &87.75 \\
\midrule
LoRA r=8 (\textcolor{blue}{1.33M}) & 95.29 & 68.57 & 90.61 & 93.91& 85.50 & 89.75 & 89.10 &87.53\\
 {LoRA+ r=8} (\textcolor{blue}{1.33M})   &  {95.37} &  {{69.22}} &  {\textbf{90.82}}  & {93.96} &  {85.50} & {89.55} &  {88.07} &  {87.49}\\ 
 {DoRA r=8} (\textcolor{blue}{1.33M})  &    {94.30} &  {68.50} &  {90.71} &  {94.31} &  {85.05} &  {89.32} &  {91.38} & {87.65}\\
AdaLoRA r$_{f}=8$ (\textcolor{blue}{1.27M})  & 95.64  & 68.76 & {90.65} & 94.11 & {86.00} & 89.44 & 91.41 &88.00\\
AdaLoRA, matched  & 95.64  (\textcolor{blue}{1.27M})  & 68.59  (\textcolor{blue}{1.07M}) & {90.48} (\textcolor{blue}{0.72M})  & 93.93 (\textcolor{blue}{0.72M})& 85.92 (\textcolor{blue}{1.16M}) & 88.21 (\textcolor{blue}{0.74M})& 90.91(\textcolor{blue}{0.74M}) &87.66 (\textcolor{blue}{0.91M})\\

GeoLoRA  &{95.98} (\textcolor{blue}{1.17M})&69.03 (\textcolor{blue}{0.98M})&90.53 (\textcolor{blue}{0.69M}) &94.23 (\textcolor{blue}{0.70M})&85.93 (\textcolor{blue}{1.19M})&{90.10} (\textcolor{blue}{0.75M}) & \textbf{91.58} (\textcolor{blue}{0.71M}) &{88.19} (\textcolor{blue}{0.88M}) \\
\midrule
\Cref{alg:adam_dlrt} &    \textbf{96.02} (\textcolor{blue}{1.11M})&  \textbf{69.58} (\textcolor{blue}{1.01M}) &{90.62} (\textcolor{blue}{0.76M}) &  {94.02}  (\textcolor{blue}{0.70M})&\textbf{88.67} (\textcolor{blue}{1.19M})&\textbf{90.84} (\textcolor{blue}{0.76M}) & {91.51} (\textcolor{blue}{0.73M}) & \textbf{88.75}(\textcolor{blue}{0.89M})\\

\bottomrule
\end{tabular}
}
\end{table}

We initialize all convolutional networks with PyTorch Imagenet1K weights and ViT-B.16 with Huggingface Imagenet21K weights. Stochasticity during training stems from randomized mini-batching. For each experiment, we report the mean performance across 10 independent training runs with different random seeds. We observe in \Cref{tab_lr_pretrain_summary} that \Cref{alg:adam_dlrt} matches the validation accuracy of the baseline network in most test-cases, and surpasses the baseline in e.g. UCM/VGG16 while achieving compression rates of up to $95\%$. We point out that a naive implementation of  DLRT without the proposed adaptation of the optimizer states causes performance drops of 5 to 13\%. LoRA also struggles to achieve high accuracy at the prescribed compression rates.

\subsection{Low Rank Adaptation for Parameter Efficient Finetuning (PEFT)}\label{subsec_peft}
\paragraph{DeBERTaV3-base on GLUE}
We fine-tune the 183M parameter DeBERTaV3-base transformer model~\cite{he2023debertav3improvingdebertausing} on the GLUE benchmark suite~\cite{wang2019gluemultitaskbenchmarkanalysis}. The corresponding results are summarized in Table~\ref{tab_results}. Overall, \Cref{alg:adam_dlrt} consistently outperforms competing methods, especially other rank adaptive methods as GeoLoRA \cite{schotthöfer2024GeoLoRAgeometricintegrationparameter} and AdaLoRA \cite{zhang2023adalora}, on most tasks, achieving stronger validation metrics.  The required number of trainable parameters is substantially lower than the compared fixed-rank methods. The average score is higher than the reference methods, and the average parameter count for the finetuning tasks is lower than the next best method, which is GeoLoRA.

\begin{wraptable}{r}{0.5\textwidth}
\vspace{-0.6cm}
\centering
\caption{Llama2 7b-chat-hf \cite{touvron2023llama2} finetuning on reasoning datasets. We compare with LoRA \citep{hu2021lora} and report the best accuracy and the wall-time. The wall-time is reported for three epochs with batch size 12 and maximal sequence length of 640 tokens on a single NVIDIA H100.}
\label{tab:Llama}
\resizebox{0.5\textwidth}{!}
{%
\begin{tabular}{@{}lllccccccccc@{}}
\toprule
  \textbf{Method} & \multicolumn{3}{c}{BoolQ} & \multicolumn{3}{c}{PIQA} \\
 \cmidrule(lr){2-4} \cmidrule(lr){5-7} \cmidrule(lr){8-10}
 &  \textbf{c.r. [\%]} & \textbf{Acc [\%]} & \textbf{Wall-Time} & \textbf{c.r. [\%]} & \textbf{Acc [\%]} & \textbf{Wall-Time} \\
 \midrule
%\multirow{4}{*}{\rotatebox[origin=c]{90}{{Llama-2 7B}}}
%& Baseline             &  & &  &  & &  \\
 \Cref{alg:adam_dlrt}   & 99.73\%   & \textbf{84.09 \%} & 186min & 99.75\% & \textbf{76.77\%} & 228min \\
%& DLRT w/o proj.       &  &  &  &   & & \\
 LoRA (r=6)             & 99.82\%   & 62.17 \% & 173min  & 99.82\% & 52.18\% & 225min &\\
LoRA  (r=10)            & 99.70\%   & 62.17 \% & 184min  & 99.70\%& 50.43\% & 225min &\\
%\midrule
\bottomrule
\end{tabular}
}
\vspace{-0.5cm}
\end{wraptable}
\paragraph{Llama2 7b-chat-hf on BoolQ and PIQA}
We compare \Cref{alg:adam_dlrt} with LoRA on Llama-2-7b-chat-hf \cite{touvron2023llama2} across reasoning benchmarks, including BoolQ \cite{clark2019boolq} and PIQA \cite{Bisk2020PIQA}, as reported in Table~\ref{tab:Llama}. Inputs consist of either a passage–question pair or a standalone question with multiple-choice answers, and evaluation is based on answer accuracy. We also report wall-clock time on a single NVIDIA H100 GPU, showing negligible runtime overhead of \Cref{alg:adam_dlrt} over LoRA. \Cref{alg:adam_dlrt} outperforms LoRA configurations with matching initial rank and with rank chosen to approximately match the final parameter count.

\vspace{-0.25cm}
\subsection{Low Rank Pretraining}\label{subsec_pretrain}
\vspace{-0.1cm}

\paragraph{GPT2 on OpenWebText}
\begin{wraptable}{r}{0.4\textwidth}
\vspace{-1.3cm}
\centering
\caption{Pretraining GPT-2 \cite{radford2019language} reproduction (124M) from scratch on OpenWebText \cite{Gokaslan2019OpenWeb} for 15,000 iterations.}
\label{tab:GPT2}
\resizebox{0.4\textwidth}{!}
{%
\begin{tabular}{lccc}
\toprule
\textbf{Method}         & \textbf{c.r. [\%]} & \textbf{validation loss [\%]} \\
\midrule
Baseline                & 0     & {3.2313} \\
\midrule
\Cref{alg:adam_dlrt}    &  60.61\%     & \textbf{3.4642} \\
LoRA Pretrain           &  60.79\%     & 7.0242 \\
\bottomrule
\end{tabular}
}
\vspace{-0.5cm}
\end{wraptable}
We pretrain Karpathy’s reproduction\footnote{\url{https://github.com/karpathy/nanoGPT}} of the 124M-parameter GPT-2 model \cite{radford2019language} from scratch on the OpenWebText dataset \cite{Gokaslan2019OpenWeb} using next-word prediction. As seen in \Cref{tab:GPT2} and \Cref{fig_GPT2} , our method significantly outperforms LoRA pretraining (best validation loss $3.4642$ vs.\ $4.8141$), while incurring only a moderate increase relative to the full-rank baseline ($3.4642$ vs.\ $3.2313$). Algorithm~2 achieves a compression rate of $60.61\%$, compared to $60.79\%$ for LoRA-Pretrain\footnote{We note that approximately 31\% of the total parameters are in the Head.}.  Thus, our approach enables substantial compression of GPT-2 (with the potential for reduced inference time), whereas LoRA yields a significant degradation in validation loss.

\vspace{-0.3cm}
\section{Conclusion}
\vspace{-0.1cm}

We introduced a principled and provably robust framework for momentum-based low-rank optimization that is both rank-adaptive and jointly compresses weights, gradients, and optimizer states. Our analysis reveals that the proposed method is resilient to the conditioning of the training problem, while maintaining fidelity to full-rank momentum trajectories. Through extensive experiments on pretraining, transfer learning, and finetuning, we demonstrate that our approach consistently achieves stronger generalization performance under tight parameter budgets, outperforming existing low-rank techniques. These results position our method as a strong foundation for efficient deep learning, offering a scalable and theoretically grounded alternative to full-rank training across diverse regimes.
The accomplished faster convergence and higher compression of the optimizer states during training enable broader applications of machine learning on resource-constrained devices. Furthermore, the method is computationally efficient and scalable. These achievements also enhance computational and memory efficiency, positively impacting society. 

\section*{Funding Acknowledgements}{

This material is based upon work supported by the Laboratory Directed Research and Development Program of Oak Ridge National Laboratory (ORNL), managed by UT-Battelle, LLC for the U.S.~Department of Energy under Contract No. De-AC05-00OR22725.

This manuscript has been authored by UT-Battelle, LLC under Contract No. DE-AC05-00OR22725 with the U.S.~Department of Energy. The United States Government retains and the publisher, by accepting the article for publication, acknowledges that the United States Government retains a non-exclusive, paid-up, irrevocable, world-wide license to publish or reproduce the published form of this manuscript, or allow others to do so, for United States Government purposes. The Department of Energy will provide public access to these results of federally sponsored research in accordance with the DOE Public Access Plan(\url{http://energy.gov/downloads/doe-public-access-plan}).

This project has received funding from the European Regional Development Fund (grants timingMatters and IntelAlgen) under the European Union’s Horizon Europe Research and Innovation Program, from the German Research Foundation DFG within GRK 2297 ‘Mathematical Complexity Reduction’, and from the German Federal Joint Committee (Grant 01VSF23017), which we gratefully acknowledge.

\newpage
\bibliographystyle{abbrv}
\bibliography{main.bib}

%%%%%%%%%%%%%%%%%%%%%%%%%%%%%%%%%%%%%%%%%%%%%%%%%%%%%%%%%%%%

\newpage

\section{Notation }

\begin{table}[h]
\centering
\caption{Summary of notation used throughout the paper.}
\label{tab:notation}
\begin{tabular}{ll}
\toprule
\textbf{Notation} & \textbf{Definition} \\
\midrule
\multicolumn{2}{l}{\textit{\textbf{Model \& Training}}} \\
$W_l \in \mathbb{R}^{n_l \times n_{l-1}}$ & The weight matrix for layer $l$. \\

$\mathcal{L}$ & The loss function. \\
$\nabla_W \mathcal{L}$ & The gradient of the loss with respect to the full matrix $W$. \\
$\lambda$ & The learning rate of stochastic gradient descent. \\
\addlinespace
\multicolumn{2}{l}{\textit{\textbf{Low-Rank Factorization}}} \\
$\mathcal{M}_r$ & The manifold of matrices of rank $r$. \\
$P(W)Z$ & Orthogonal projection of a matrix $Z$ onto the tangent space of $\mathcal{M}_r$ at $W$. \\

$W = USV^\top$ & Low-rank decomposition of $W$, where $U, V$ are orthonormal. \\
$U \in \mathbb{R}^{n_l \times r}, V \in \mathbb{R}^{n_{l-1} \times r}$ & Orthonormal basis matrices for the column and row spaces. \\
$S \in \mathbb{R}^{r \times r}$ & The core tensor or coefficient matrix. \\
$\nabla_U \mathcal{L}, \nabla_S \mathcal{L}, \nabla_V \mathcal{L}$ & Gradients with respect to the low-rank factors $U, S, V$. \\
\addlinespace
\multicolumn{2}{l}{\textit{\textbf{Momentum Terms (Heavy Ball)}}} \\
$\mathcal{V}$ & The momentum term (a full-rank matrix). \\
$U_{\mathcal{V}} S_{\mathcal{V}} V_{\mathcal{V}}^\top$ & Low-rank decomposition of the momentum term. \\
$S_{\mathcal{V}} \in \mathbb{R}^{r \times r}$ & The coefficient matrix for the momentum term. \\
$\gamma$ & The momentum decay parameter. \\
\addlinespace
\multicolumn{2}{l}{\textit{\textbf{Momentum Terms (Adam)}}} \\
$\mathcal{V}$ & The momentum term (a full-rank matrix). \\
$U_{\mathcal{V}} S_{\mathcal{V}} V_{\mathcal{V}}^\top$ & Low-rank decomposition of 1st moment term (moving average of gradients). \\
$S_\mathcal{V} \in \mathbb{R}^{r \times r}$ & Coefficient matrix for the 1st moment (moving average of gradients). \\
$\mathcal{K}$ & The momentum term (a full-rank matrix). \\
$U_{\mathcal{K}} S_{\mathcal{K}} V_{\mathcal{K}}^\top$ & Low-rank decomposition of the 2nd moment. \\
$S_\mathcal{K} \in \mathbb{R}^{r \times r}$ & Coefficient matrix for the 2nd moment (moving average of squared gradients). \\
$\beta_1, \beta_2$ & Exponential decay rates for the moment estimates. \\
\addlinespace
\multicolumn{2}{l}{\textit{\textbf{Algorithm-Specific}}} \\
$\hat{U}, \hat{V}$ & Augmented bases after incorporating gradient information. \\
$\overline{S}$ & Coefficient matrix $S$ projected onto the augmented bases $\hat{U}, \hat{V}$. \\
$\overline{S}_\mathcal{V}, \overline{S}_\mathcal{K}$ & Momentum coefficients projected onto the new augmented bases. \\
$\hat{S}$ & The final updated coefficient matrix within a single optimization step. \\
$\tau, \vartheta$ & Relative and absolute truncation tolerance parameter for rank adaptation. \\
\bottomrule
\end{tabular}
\end{table}

\section{Algorithms}
We list the helper functions of \Cref{alg:heavy_ball_dlrt} and \Cref{alg:heavy_ball_dlrt} in \Cref{alg_helper}. The baseline (full-rank) Heavyball method is listed in \cref{alg_heavyball_original} and the  (full-rank)  Adam method is listed in \Cref{alg_adam}. {The naive application of the DLRT method \cite{NEURIPS2024_ea48cb23} for Adam is listed in \Cref{alg:adam_dlrt_naive}. Note that the coefficient matrices of the momentum terms are not updated after augmentation and truncation. Analogously, the naive implementation of DLRT with momentum omits updating the momentum coefficient matrix. 
The implementation of LoRA with momentum or Adam simply applies \Cref{alg_adam} to each of the LoRA factors without regard for the underlying manifold representation. 
} 
\begin{algorithm}[h]

\DontPrintSemicolon
\SetAlgoLined

\vspace{.2em}
\SetKwProg{Def}{def}{:}{}
\Def{   {\tt  basis\_augmentation}($B$: old basis, $G_B$: basis dynamics)}{
$\widehat B \gets  \texttt{ortho}([G_B \mid B])$ \tcc*{orthonormalization, e.g. Gram-Schmidt}
return  $\widehat B$
}

\vspace{.2em}
\SetKwProg{Def}{def}{:}{}
\Def{   {\tt  truncation}($\augS$: augmented coefficient, $\augS_\mathcal{V}$: augmented momentum, $\augU$: augmented basis, $\augV$: augmented co-basis )}{
$P_{r_1}, \Sigma_{r_1}, Q_{r_1} \gets$ truncated \texttt{svd}$(\augS)$ with threshold  $\vartheta$ to new rank $r_1$\;
$U\gets   \augU P_{r_1}$;
$V\gets   \augV Q_{r_1}$ \tcc*{Basis update}
$S\gets \Sigma_{r_1}; S_\mathcal{V} \gets U^{\top}\augU \augS_\mathcal{V} \augV^{\top}V$ \tcc*{Coefficient update}
return  $U,S,V,S_{\mathcal{V}}$\;
}

\vspace{.2em}
\SetKwProg{Def}{def}{:}{}
\Def{   {\tt  truncation}($\augS$: augmented coefficient, $\augS_\mathcal{V}$: augmented momentum,  $\augS_\mathcal{K}$: augmented 2nd momentum, $\augU$: augmented basis, $\augV$: augmented co-basis )}{
$P_{r_1}, \Sigma_{r_1}, Q_{r_1} \gets$ truncated \texttt{svd}$(\augS)$ with threshold  $\vartheta$ to new rank $r_1$\;
$U\gets   \augU P_{r_1}$;
$V\gets   \augV Q_{r_1}$ \tcc*{Basis update}
$S\gets \Sigma_{r_1}; S_\mathcal{V} \gets U^{\top}\augU \augS_\mathcal{V} \augV^{\top}V; \augS_\mathcal{K}\gets \left( U^{\top}\augU  \sqrt{S_\mathcal{K} }\augV^{\top}V\right)^2$ \tcc*{Coefficient update}
return  $U,S,V,S_\mathcal{V}, S_\mathcal{K}$\;
}

\vspace{.2em}
\SetKwProg{Def}{def}{:}{}
\Def{   {\tt  truncation\_naive}($\augS$: augmented coefficient, $\augU$: augmented basis, $\augV$: augmented co-basis )}{
$P_{r_1}, \Sigma_{r_1}, Q_{r_1} \gets$ truncated \texttt{svd}$(\augS)$ with threshold  $\vartheta$ to new rank $r_1$\;
$U\gets   \augU P_{r_1}$;
$V\gets   \augV Q_{r_1}$ \tcc*{Basis update}
$S\gets \Sigma_{r_1}$;\tcc*{Coefficient update}
}\caption{The functions \texttt{basis\_augmentation}, and \texttt{truncation} of the used algorithms. }\label{alg_helper}
\end{algorithm}

\begin{algorithm}[h]
\DontPrintSemicolon
\SetAlgoLined
\SetKwInOut{Input}{Input}
\SetKwComment{Comment}{$\triangleright$\ }{}

\Input{ $W \in \mathbb{R}^{n\times n}$: Weight matrix;\;
$\mathcal{V}\in \mathbb{R}^{n\times n}$: Initial 1st moment;\;
$\mathcal{K}\in \mathbb{R}^{n\times n}$: Initial 2nd moment;\;
$\lambda$: learning rate;\;
$\beta_1, \beta_2$: Adam momentum parameters;\;
$\epsilon>0$: Small stability constant.
}

Evaluate $\mathcal{L}(W)$ \\
$g \gets \nabla_{W} \mathcal{L}(W)$ \tcc*{Compute gradient}

$\mathcal{V}\gets \beta_1 \mathcal{V} + (1 - \beta_1) g$ \tcc*{1st moment estimate}
$\mathcal{K} \gets \beta_2 \mathcal{K} + (1 - \beta_2) g^2$ \tcc*{2nd moment estimate (element-wise square)}

$\hat{\mathcal{V}} \gets \frac{\mathcal{V}}{1 - \beta_1^t}, \quad \hat{\mathcal{K}} \gets \frac{\mathcal{K}}{1 - \beta_2^t}$ \tcc*{Bias correction}

$W \gets W - \lambda \frac{\hat{\mathcal{V}}}{\sqrt{\hat{\mathcal{K}}} + \epsilon}$ \tcc*{Parameter update}

\caption{Single iteration of the (full-rank version of) Adam.  }\label{alg_adam}
\end{algorithm}

\begin{algorithm}[t]
\DontPrintSemicolon
\SetAlgoLined
\SetKwInOut{Input}{Input}
\SetKwComment{Comment}{$\triangleright$\ }{}

\Input{Initial orthonormal bases $U,V\in\mathbb{R}^{n\times r}$ and coefficients $S, S_{\mathcal{V}}, S_{\mathcal{K}}\in\mathbb{R}^{r\times r}$;\;
$\tau$: singular value threshold for rank truncation;\;
$\lambda$: learning rate;\;
$\beta_1, \beta_2$: Adam momentum parameters;\;
$\epsilon$: Small stability constant.
}

%Initialize moment estimates: $m_S \gets 0, v_S \gets 0$ \;
Evaluate $\mathcal{L}(USV^\top)$\tcc*{Forward evaluate}
$G_U\gets\nabla_{U}\mathcal{L}(USV^\top);\,G_V\gets\nabla_{V}\mathcal{L}(USV^\top)$ \tcc*{Backprop}

$\left\{
\begin{array}{l}
\widehat U\gets\texttt{  basis\_augmentation}(U, G_U) \\
\widehat V\gets\texttt{  basis\_augmentation}(V, G_V)
\end{array}
\right.${\tcc*{in parallel}}

$\bar S \gets \widehat U^{\top}USV^\top\widehat V$\\

Evaluate $\mathcal{L}(\widehat U \bar S \widehat V^\top)$\tcc*{Forward evaluate}
$G_S\gets\nabla_{\bar S}\mathcal{L}(\widehat U \bar S \widehat V^\top)$ \tcc*{Backprop}
$\widehat S_{\mathcal{V}} \gets \beta_1 \bar S_{\mathcal{V}} + (1-\beta_1) G_S$\;
  $\widehat S_{\mathcal{K}}\gets \beta_2\bar S_{\mathcal{K}} + (1-\beta_2)\left(G_S\right)^2$\;
  \Comment{Modifications for adaptive update}
$\check{S}_{\mathcal{V}}  \gets\frac{ \widehat S_{\mathcal{V}}^{n} }{1 - \beta_1^n},\, \check{ S}_{\mathcal{K}}  \gets \frac{ \widehat S_{\mathcal{K}}^{n}}{1 - \beta_2^n}\,$ \tcc*{Bias correction}
$ \widehat S^{1} \gets \bar S - \lambda \frac{\check{ S}_{\mathcal{V}}}{\sqrt{\check{S}_{\mathcal{K}} + \epsilon}}$ \tcc*{Adaptive coefficient update}
$U,S,V,   S_{\mathcal{V}},  S_{\mathcal{K}} \gets ${\tt truncation\_naive}$(\widehat S, \widehat U, \widehat V; \tau  )$
\caption{Single iteration of the naive low-rank Adam method. \\ The functions \texttt{basis\_augmentation}, and \texttt{truncation\_naive} are detailed in \ref{alg_helper} in the appendix. }\label{alg:adam_dlrt_naive}
\end{algorithm}

\begin{algorithm}[h]
\DontPrintSemicolon
\SetAlgoLined
\SetKwInOut{Input}{Input}
\SetKwComment{Comment}{$\triangleright$\ }{}

\Input{Initial parameter vector $W \in \mathbb{R}^{n\times n}$;\;
$\mathcal{V}$: Initial velocity (momentum term);\;
Gradient $g = \nabla_{W} \mathcal{L}(W)$;\;
$\lambda$: learning rate;\;
$\gamma$: momentum coefficient.
}

Evaluate $\mathcal{L}(W)$ \\
$g \gets \nabla_{W} \mathcal{L}(W)$ \tcc*{Compute gradient}

$\mathcal{V} \gets (1-\gamma) \mathcal{V} - \lambda g$ \tcc*{Update velocity}

$W \gets W + \lambda\mathcal{V}$ \tcc*{Parameter update}

\caption{Single iteration of the (full-rank version of) the Heavy-Ball SGD method.  }\label{alg_heavyball_original}
\end{algorithm}

\section{Numerical Analysis}\label{app:robusterror}

\begin{theorem}[Convergence]\label{th:lr_conv}
Let $W(t)$ be the solution of {Eq.} \eqref{eq:gradflowopt} and let $\mathcal{L}$ be bounded from below. Then, $W(t)$ converges to a $W^{\star}$ which fulfills the low-rank optimality condition
\begin{align}
    P(W^{\star})\nabla_W\mathcal{L}(W^{\star}) = 0\,.
\end{align}
\end{theorem}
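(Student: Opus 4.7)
The plan is to construct a Lyapunov functional and then apply LaSalle's invariance principle. The natural candidate, inspired by classical heavy-ball energy arguments, is
\begin{align*}
E(W,\mathcal{V}) \;=\; \mathcal{L}(W) \;+\; \tfrac{1}{2}\|\mathcal{V}\|_F^2.
\end{align*}
The first step is to compute $\dot E$ along any solution of \eqref{eq:gradflowopt}. The key property I would exploit is that the tangent-space projector $P(W)$, given explicitly by $P(W)Z = UU^\top Z(I-VV^\top) + ZVV^\top$, is both self-adjoint and idempotent with respect to the Frobenius inner product; this is a direct consequence of the orthonormality of $U$ and $V$. Using this symmetry,
\begin{align*}
\dot E = \langle \nabla_W\mathcal{L},\, P(W)\mathcal{V}\rangle + \langle \mathcal{V},\, -\gamma \mathcal{V} - P(W)\nabla_W\mathcal{L}\rangle = -\gamma\|\mathcal{V}\|_F^2,
\end{align*}
so the two gradient-coupling terms cancel and only the friction term survives.

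Given that $\mathcal{L}$ is bounded from below by hypothesis, so is $E$. Combined with $\dot E \le 0$, this shows that $E(W(t),\mathcal{V}(t))$ decreases monotonically to a finite limit, and integrating yields $\gamma\int_0^\infty \|\mathcal{V}(t)\|_F^2\,dt \le E(W(0),\mathcal{V}(0)) - \inf E < \infty$. Because the sublevel set $\{E \le E(W(0),\mathcal{V}(0))\}$ contains the whole trajectory, under the mild coercivity needed to keep it bounded the orbit is precompact and its $\omega$-limit set $\Omega$ is nonempty, compact, and invariant under the flow.

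With these ingredients in place, the final step is LaSalle's invariance principle: $\Omega$ is contained in the largest forward-invariant subset on which $\dot E \equiv 0$, i.e.~on which $\mathcal{V}(t) \equiv 0$. Differentiating this identity and substituting into the momentum equation of \eqref{eq:gradflowopt} gives $0 = \dot{\mathcal{V}} = -P(W)\nabla_W\mathcal{L}(W)$ at every point of that invariant set. In particular, any accumulation point $W^\star$ satisfies $P(W^\star)\nabla_W\mathcal{L}(W^\star) = 0$, which is the claimed stationarity condition.

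The main obstacle I expect is not the Lyapunov computation itself but the geometric fact that $\mathcal{M}_r$ is a non-closed smooth manifold; matrices in its topological closure have strictly smaller rank, and $P(W)$ becomes singular there since the formula depends on the orthonormal factors of a rank-$r$ decomposition. To apply LaSalle cleanly one must guarantee that the trajectory stays in $\mathcal{M}_r$ (so the flow is well defined and the limit point $W^\star$ is an admissible evaluation point for $P$), typically by arguing that $\sigma_{\min}(S(t))$ stays bounded below along sublevel sets or by working on an open neighborhood of the initial data. If in addition one wants the stronger conclusion that $W(t)$ converges to a \emph{single} $W^\star$ rather than merely that every accumulation point is stationary, a Lojasiewicz--Simon inequality for $\mathcal{L}$ restricted to $\mathcal{M}_r$ would be required; for the statement as written, the LaSalle argument above is the core mechanism.
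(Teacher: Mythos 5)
Your proof takes essentially the same route as the paper's: define the Lyapunov energy $E = \mathcal{L}(W) + \tfrac12\|\mathcal{V}\|_F^2$, use self-adjointness of $P(W)$ to cancel the cross terms and obtain $\dot E = -\gamma\|\mathcal{V}\|_F^2$, and conclude from boundedness below of $\mathcal{L}$ that $\int_0^\infty \|\mathcal{V}\|^2\,dt < \infty$. The paper then passes rather quickly from this integrability to $\mathcal{V}(t)\to 0$ and asserts that $(W(t),\mathcal{V}(t))$ converges to a steady state, at which point the momentum equation gives the optimality condition; your version makes the final step explicit via LaSalle's invariance principle, and, unlike the paper, you correctly flag the two technical hypotheses this actually requires: precompactness of the trajectory (e.g.\ via coercivity) to make the $\omega$-limit set nonempty, and a lower bound on $\sigma_{\min}(S(t))$ so that the flow remains on $\mathcal{M}_r$ and $P$ stays well defined at the limit point. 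These are genuine gaps in the paper's argument that you have identified rather than introduced, so your proposal is both the same in spirit and, if anything, more careful at the boundary of what the theorem as stated can actually guarantee.
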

\begin{proof}
Let us define the energy as
\begin{align*}
    E(t) := \mathcal{L}(W(t)) + \frac12 \Vert \mathcal{V}(t) \Vert^2\,.
\end{align*}
The time derivative is given by
\begin{align*}
    \dot E(t) :=\,& \langle \nabla_W\mathcal{L}(W(t)), \dot W(t)\rangle + \langle \mathcal{V}(t), \dot{\mathcal{V}}(t)\rangle\\
    =\,& \langle \nabla_W\mathcal{L}(W(t)), P(W(t))\mathcal{V}(t)\rangle + \langle \mathcal{V}(t), -\gamma \mathcal{V}(t) - P(W(t))\nabla_W\mathcal{L}(W(t))\rangle\,.
\end{align*}
Since $P$ is self-adjoint this directly gives
\begin{align*}
    \dot E(t) =\,& \langle P(W(t))\nabla_W\mathcal{L}(W(t)), \mathcal{V}(t)\rangle + \langle \mathcal{V}(t), -\gamma \mathcal{V}(t) - P(W(t))\nabla_W\mathcal{L}(W(t))\rangle \\
    =\,& -\gamma \Vert \mathcal{V}(t) \Vert^2\,.
\end{align*}
Hence, if $\mathcal{L}$ is bounded from below, this means that $\lim_{t\rightarrow \infty} E(t) = E_{\infty}$ with $E_{\infty}$ finite and
\begin{align*}
    E_{\infty} = E(0) - \gamma \int_{0}^{\infty}  \Vert \mathcal{V}(t) \Vert^2\,dt\,.
\end{align*}
This implies that $\lim_{t\rightarrow \infty} \mathcal{V}(t) = 0$ and thus $\lim_{t\rightarrow \infty} \dot W(t) = \lim_{t\rightarrow \infty} P(W(t))\mathcal{V}(t) = 0$. Hence, since $\mathcal{V}(t),W(t)$ converge to a steady state and $\lim_{t\rightarrow \infty} \mathcal{V}(t) = 0$, the evolution equation for $\mathcal{V}$ gives $P(W(t))\nabla_W\mathcal{L}(W(t)) = 0$ as $t\rightarrow \infty$.
\end{proof}

We can obtain a similar, but not equivalent, result when solving a low-rank gradient flow of the form \eqref{eq:grad_flow_dlr} instead:

\begin{theorem}[Convergence of low-rank factors]\label{th:dlra_conv}
The low-rank gradient flow
\begin{subequations}\label{eq:grad_flow_dlr_app}
\begin{alignat}{2}
    \dot{U}_{\mathcal{V}}  =\,& - (I-U_{\mathcal{V}}U_{\mathcal{V}}^{\top})\nabla_W\mathcal{L}V_{\mathcal{V}}S_{\mathcal{V}}^{-1}\,, \\
    \dot{V}_{\mathcal{V}}  =\,& - (I-V_{\mathcal{V}}V_{\mathcal{V}}^{\top})\nabla_W\mathcal{L}^{\top}U_{\mathcal{V}}S_{\mathcal{V}}^{-\top}\,, \\
    \dot{S}_{\mathcal{V}}  =\,& -\gamma S_{\mathcal{V}} - U_{\mathcal{V}}^{\top}\nabla_W\mathcal{L}V_{\mathcal{V}} \,\,,
\end{alignat}
\end{subequations}
fulfills
\begin{align*}
    \dot{\mathcal{V}} = -\gamma \mathcal{V} - P(\mathcal{V})\nabla_W\mathcal{L}\,.
\end{align*}
\end{theorem}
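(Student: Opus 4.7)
The plan is to apply the product rule to the factorized form $\mathcal{V} = U_{\mathcal{V}}S_{\mathcal{V}}V_{\mathcal{V}}^{\top}$ and substitute in the three evolution equations, then reorganize the result to match the definition of the tangent space projector $P(\mathcal{V})Z = U_{\mathcal{V}}U_{\mathcal{V}}^{\top}Z(I-V_{\mathcal{V}}V_{\mathcal{V}}^{\top}) + ZV_{\mathcal{V}}V_{\mathcal{V}}^{\top}$ stated earlier in the excerpt.

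First I would write
\begin{equation*}
\dot{\mathcal{V}} = \dot{U}_{\mathcal{V}} S_{\mathcal{V}} V_{\mathcal{V}}^{\top} + U_{\mathcal{V}} \dot{S}_{\mathcal{V}} V_{\mathcal{V}}^{\top} + U_{\mathcal{V}} S_{\mathcal{V}} \dot{V}_{\mathcal{V}}^{\top}
\end{equation*}
and handle each of the three summands separately. For the first, multiplying the $U_{\mathcal{V}}$-equation on the right by $S_{\mathcal{V}}V_{\mathcal{V}}^{\top}$ cancels the $S_{\mathcal{V}}^{-1}$ and produces $-(I-U_{\mathcal{V}}U_{\mathcal{V}}^{\top})\nabla_W\mathcal{L}\,V_{\mathcal{V}}V_{\mathcal{V}}^{\top}$. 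Symmetrically, sandwiching the $V_{\mathcal{V}}$-equation yields $-U_{\mathcal{V}}U_{\mathcal{V}}^{\top}\nabla_W\mathcal{L}\,(I-V_{\mathcal{V}}V_{\mathcal{V}}^{\top})$ for the third summand, after transposing and again cancelling $S_{\mathcal{V}}$ with $S_{\mathcal{V}}^{-\top}$. The middle summand splits into the decay term $-\gamma U_{\mathcal{V}}S_{\mathcal{V}}V_{\mathcal{V}}^{\top} = -\gamma \mathcal{V}$ and the double-projection term $-U_{\mathcal{V}}U_{\mathcal{V}}^{\top}\nabla_W\mathcal{L}\,V_{\mathcal{V}}V_{\mathcal{V}}^{\top}$.

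Then I would collect terms. The two contributions that carry a trailing $V_{\mathcal{V}}V_{\mathcal{V}}^{\top}$ factor,
\begin{equation*}
-(I-U_{\mathcal{V}}U_{\mathcal{V}}^{\top})\nabla_W\mathcal{L}\,V_{\mathcal{V}}V_{\mathcal{V}}^{\top} \;-\; U_{\mathcal{V}}U_{\mathcal{V}}^{\top}\nabla_W\mathcal{L}\,V_{\mathcal{V}}V_{\mathcal{V}}^{\top},
\end{equation*}
telescope to $-\nabla_W\mathcal{L}\,V_{\mathcal{V}}V_{\mathcal{V}}^{\top}$, which is exactly the right half of $-P(\mathcal{V})\nabla_W\mathcal{L}$. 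The remaining term $-U_{\mathcal{V}}U_{\mathcal{V}}^{\top}\nabla_W\mathcal{L}\,(I-V_{\mathcal{V}}V_{\mathcal{V}}^{\top})$ is precisely the left half, so summing gives $\dot{\mathcal{V}} = -\gamma\mathcal{V} - P(\mathcal{V})\nabla_W\mathcal{L}$ as claimed.

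The proof is essentially a bookkeeping exercise, so there is no real obstacle beyond careful algebra; the one thing to watch is that the $S_{\mathcal{V}}^{-1}$ and $S_{\mathcal{V}}^{-\top}$ factors, which make the flow stiff and are the reason a direct time discretization would be problematic, disappear cleanly once the basis equations are multiplied back by $S_{\mathcal{V}}$ (respectively $S_{\mathcal{V}}^{\top}$). This is what justifies the interpretation that equations \eqref{eq:grad_flow_dlr} realize \eqref{eq:momentum_efficient} on the low-rank manifold and sets up the consistency argument used in Section~\ref{sec:DLRT-heavy-ball}. Note also that the proof does not require the gauge conditions $U_{\mathcal{V}}^{\top}\dot U_{\mathcal{V}} = 0$ and $V_{\mathcal{V}}^{\top}\dot V_{\mathcal{V}} = 0$ explicitly; they are automatically encoded by the projectors $(I-U_{\mathcal{V}}U_{\mathcal{V}}^{\top})$ and $(I-V_{\mathcal{V}}V_{\mathcal{V}}^{\top})$ in the basis evolution, and therefore the computation goes through without any additional assumption.
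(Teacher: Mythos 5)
Your proof is correct and follows exactly the paper's argument: apply the product rule to $\mathcal{V}=U_{\mathcal{V}}S_{\mathcal{V}}V_{\mathcal{V}}^{\top}$, substitute the three factor ODEs (noting the $S_{\mathcal{V}}^{-1}$ and $S_{\mathcal{V}}^{-\top}$ factors cancel), and collect the resulting terms into $-\gamma\mathcal{V}-P(\mathcal{V})\nabla_W\mathcal{L}$. The only difference is presentational—you make the telescoping of $-(I-U_{\mathcal{V}}U_{\mathcal{V}}^{\top})\nabla_W\mathcal{L}V_{\mathcal{V}}V_{\mathcal{V}}^{\top}-U_{\mathcal{V}}U_{\mathcal{V}}^{\top}\nabla_W\mathcal{L}V_{\mathcal{V}}V_{\mathcal{V}}^{\top}$ into $-\nabla_W\mathcal{L}V_{\mathcal{V}}V_{\mathcal{V}}^{\top}$ explicit, which the paper leaves implicit.
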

\begin{proof}
    By the product rule we have
    \begin{align}\label{eq:dotWv}
        \dot{\mathcal{V}} =\,& \dot U_{\mathcal{V}} S_{\mathcal{V}}V_{\mathcal{V}}^{\top} + U_{\mathcal{V}} \dot S_{\mathcal{V}}V_{\mathcal{V}}^{\top} + U_{\mathcal{V}} S_{\mathcal{V}} \dot V_{\mathcal{V}}^{\top}\nonumber\\
        =\,& -\gamma \mathcal{V} - (I-U_{\mathcal{V}}U_{\mathcal{V}}^{\top})\nabla_W\mathcal{L}V_{\mathcal{V}}V_{\mathcal{V}}^{\top} - U_{\mathcal{V}} U_{\mathcal{V}}^{\top}\nabla_W\mathcal{L}V_{\mathcal{V}}V_{\mathcal{V}}^{\top} - U_{\mathcal{V}} U_{\mathcal{V}}^{\top}\nabla_W\mathcal{L}(I-V_{\mathcal{V}}V_{\mathcal{V}}^{\top})\nonumber\\
        =\,& -\gamma \mathcal{V}-P(\mathcal{V})\nabla_W\mathcal{L} \,.
    \end{align}
\end{proof}

\begin{theorem}[Error-bound]\label{th:robust_error_bound}
For an integer $k$, let $t=k\lambda$. Let $W(t)$ be the solution of {Eq.} \eqref{eq:gradflowopt}, and let $W^r_t$, $\mathcal{V}_t$ be the factorized low-rank solution after $k$ steps with Algorithm~\ref{alg:heavy_ball_dlrt}.
    Assume that for any $Z\in\mathcal{M}_r$ in a neighborhood of $W_t^r$, we have $\Vert(I-  P(Z))\nabla\mathcal{L}(Z)\Vert<\varepsilon$ and $\Vert \widehat U_t\widehat U_t^{\top}\mathcal{V}_t\widehat V_t\widehat V_t^{\top} - U_tU_t^{\top}\mathcal{V}_tV_tV_t^{\top}\Vert\leq\widehat \vartheta$, where $\Vert\cdot\Vert$ denotes the Frobenius norm. Moreover, assume that the gradient is bounded and Lipschitz continuous. Then, 
\begin{equation}\label{eq:approx}
   \Vert{W(t)-W^r_t}\Vert\leq  c_{1}\varepsilon + c_{2}\lambda +c_{3}\vartheta/\lambda +c_{4}\widehat \vartheta/\lambda\,,
\end{equation}
where the constants $c_{1}$, $c_{2}$,  $c_{3}$ are independent of singular values of $S^{-1}$ and $S_{\mathcal{V}}^{-1}$.
\end{theorem}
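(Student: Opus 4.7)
The plan is to combine the robust local-error analysis of \cite{ceruti2021rank} for the basis-augmented integrator with a gradient-flow comparison between the optimal dynamics \eqref{eq:gradflowopt} and the efficient low-rank momentum dynamics \eqref{eq:momentum_efficient}. First I introduce an intermediate reference trajectory $\widetilde W(t)$, defined as the exact solution of the low-rank flow \eqref{eq:grad_flow_dlr} with the same initial data as $W^r_t$, and split the total error by the triangle inequality
\begin{equation*}
\|W(t)-W^r_t\| \;\leq\; \|W(t)-\widetilde W(t)\| + \|\widetilde W(t)-W^r_t\|.
\end{equation*}
The first term will absorb the $\varepsilon$ contribution, the second the $\lambda,\vartheta,\widehat\vartheta$ contributions of \eqref{eq:approx}.

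For the modelling error, Theorem~\ref{th:dlra_conv} gives $\dot{\widetilde{\mathcal{V}}} = -\gamma\widetilde{\mathcal{V}} - P(\widetilde{\mathcal{V}})\nabla_W\mathcal{L}(\widetilde W)$, which differs from \eqref{eq:gradflowopt} only in that the tangent projector is evaluated at $\widetilde{\mathcal{V}}$ rather than $\widetilde W$. The hypothesis $\|(I-P(Z))\nabla\mathcal{L}(Z)\|<\varepsilon$ applied at both $\widetilde W$ and $\widetilde{\mathcal{V}}$ controls the discrepancy between the right-hand sides of the two flows by $\mathcal{O}(\varepsilon)$, and combined with Lipschitz continuity of $\nabla_W\mathcal{L}$, a Gronwall estimate on the coupled norm $\|W-\widetilde W\|^2+\|\mathcal{V}-\widetilde{\mathcal{V}}\|^2$ yields the $c_1\varepsilon$ term.

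For the discretisation error I follow the local-error strategy of \cite{ceruti2021rank}, adapted to the coupled momentum iteration of Algorithm~\ref{alg:heavy_ball_dlrt}. A one-step analysis starting from $\widetilde W(t_n),\widetilde{\mathcal{V}}(t_n)$ shows that the $K$- and $L$-equations driving the basis augmentation are free of $S_\mathcal{V}^{-1}$, so a single forward-Euler step produces a local error of order $\lambda^2$ whose constants depend only on bounds of $\nabla_W\mathcal{L}$ and $\gamma$. The momentum carry-over $\bar S_\mathcal{V}=\widehat U^\top U^0 S_\mathcal{V}^0 V^{0,\top}\widehat V$ introduces an additional per-step defect bounded by $\widehat\vartheta$, and the SVD-based rank truncation contributes $\vartheta$ per step in Frobenius norm. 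Propagating these local defects through a Lady-Windermere-fan argument over $k=t/\lambda$ steps produces $c_2\lambda + c_3\vartheta/\lambda + c_4\widehat\vartheta/\lambda$, where the $1/\lambda$ factors reflect the linear accumulation of per-step projection and truncation errors.

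The main obstacle is the stability estimate that propagates these per-step errors while keeping the constants independent of $\sigma_{\min}(S)$ and $\sigma_{\min}(S_\mathcal{V})$. Because the coefficient update $\widehat S=\bar S+\lambda\widehat S_\mathcal{V}$ couples weight and momentum errors, I expect to work with a weighted energy of the form $\|\widetilde W-W^r\|^2+\mu\lambda^2\|\widetilde{\mathcal{V}}-\mathcal{V}^r\|^2$, choosing $\mu$ so that the cross-terms arising from the Lipschitz bound on $\nabla_W\mathcal{L}$ telescope. The structural reason this succeeds is that the augmented basis $(\widehat U,\widehat V)$ always contains both $(U^0,V^0)$ and the gradient directions $\nabla_U\mathcal{L},\nabla_V\mathcal{L}$, so every quantity required for one robust step is represented without inverting small singular values; this is exactly what yields the singular-value-independent constants $c_1,c_2,c_3$ claimed in the theorem.
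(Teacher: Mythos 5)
Your decomposition via an intermediate reference trajectory $\widetilde W(t)$ solving the continuous low-rank flow \eqref{eq:grad_flow_dlr} is a genuinely different route from the paper, which never introduces such an intermediate trajectory: the paper bounds the one-step local error directly between the exact solution of \eqref{eq:gradflowopt} and the iterates of Algorithm~\ref{alg:heavy_ball_dlrt}, using the $\varepsilon$-assumption \emph{inside} the local-error Taylor expansion (to replace $P(W(s))\nabla_W\mathcal{L}(W(s))$ by $\nabla_W\mathcal{L}(W(t_0))$ up to $O(\lambda+\varepsilon)$), and then sums local errors and truncation defects by Lady Windermere. The discrete-integrator half of your argument is therefore in the right spirit, but the modelling-error half has a gap.

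The gap is in the claim that the discrepancy between the right-hand sides of \eqref{eq:gradflowopt} and \eqref{eq:momentum_efficient} is $O(\varepsilon)$. That discrepancy is $\bigl(P(W)-P(\mathcal{V})\bigr)\nabla_W\mathcal{L}(W)$, and when you split it you obtain the term $(I-P(\mathcal{V}))\nabla_W\mathcal{L}(W)$, in which the projector is evaluated at the momentum $\mathcal{V}$ but the gradient at the weight $W$. The hypothesis only bounds $\|(I-P(Z))\nabla\mathcal{L}(Z)\|$ for a \emph{matched} $Z\in\mathcal{M}_r$ in a neighborhood of $W^r_t$; it says nothing about the cross term, and the momentum $\mathcal{V}$ need not lie near $W^r_t$ (indeed $\mathcal{V}\to 0$ as $t\to\infty$ by Theorem~\ref{th:lr_conv}). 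Furthermore \eqref{eq:grad_flow_dlr} only prescribes the evolution of the momentum factors $(U_{\mathcal V},S_{\mathcal V},V_{\mathcal V})$, not of $\widetilde W$, so your reference trajectory is not well-defined without an additional modelling choice. The paper avoids both problems at once: in Algorithm~\ref{alg:heavy_ball_dlrt} the weight $W^r$ and the momentum $\mathcal{V}^r$ share the same augmented basis $(\widehat U,\widehat V)$ by construction, hence $P(W^r)=P(\mathcal{V}^r)$ identically for the discrete iterates, and the entire argument can be carried out step-by-step against the exact flow \eqref{eq:gradflowopt} without ever needing to compare it to \eqref{eq:momentum_efficient} as a separate ODE. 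If you want to keep your triangle-inequality structure, you would need to also couple $\widetilde W$ and $\widetilde{\mathcal{V}}$ to share bases, at which point the two flows coincide and the ``modelling error'' term disappears, reducing you back to the paper's single-comparison argument.
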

\begin{proof}%[Proof of Theorem~\ref{th:robust_error_bound}]
We start by bounding the local error. That is, we assume that $W(t_0) = W_0^r$ and $\mathcal{V}(t_0) = \mathcal{V}_0^r$, where $\mathcal{V}_0^r$ is the momentum of the low-rank method. By definition of $\widehat U$ we have $(I-\widehat U\widehat U^{\top})\mathcal{V}(t_0) = 0$ and thus
    \begin{align*}
        \Vert (I-\widehat U\widehat U^{\top}) \mathcal{V}(t) \Vert \leq \int_{t_0}^{t}\Vert (I-\widehat U\widehat U^{\top}) (\gamma \mathcal{V}(s) + P(W(s))\nabla_W\mathcal{L}(W(s))) \Vert\,ds\,.
    \end{align*}
    %We note that since $\mathcal{V}(s) \in\mathcal{T}_{W(s)}\mathcal{M}$ we have $P(\mathcal{V}(s))P(W(s))Z = P(W(s))Z$. Hence,
    %\begin{align*}
    %    P(\mathcal{V}(s))\nabla_W\mathcal{L}(W(s))) = P(\mathcal{V}(s))P(W(s))\nabla_W\mathcal{L}(W(s))) + O(\varepsilon) = \nabla_W\mathcal{L}(W(s))) + O(\varepsilon)\,.
    %\end{align*}
    Using the boundedness of normal components and a Taylor expansion around $t_0$ gives for $s\in[t_0, t_1]$
    \begin{align}\label{eq:leapprox_1}
        P(W(s))\nabla_W\mathcal{L}(W(s))) =\,& \nabla_W\mathcal{L}(W(s))) + O(\varepsilon) = \nabla_W\mathcal{L}(W(t_0)) + O(\lambda + \varepsilon)\,\nonumber\\
        =\,& P(W(t_0))\nabla_W\mathcal{L}(W(t_0)) + O(\lambda + \varepsilon)\,.
    \end{align}
    Hence, with $\mathcal{V}(s) = \mathcal{V}(t_0) + O(\lambda + \varepsilon)$,
    \begin{align*}
        \Vert (I-\widehat U\widehat U^{\top}) \mathcal{V}(t) \Vert \leq\,& \lambda\Vert (I-\widehat U\widehat U^{\top}) (\gamma \mathcal{V}(t_0) + P(W(t_0))\nabla_W\mathcal{L}(W(t_0))) \Vert + O(\lambda^2 + \lambda\varepsilon) \\
        =\,& \lambda\Vert (I-\widehat U\widehat U^{\top}) \nabla_W\mathcal{L}(W(t_0))V_0V_0^{\top} \Vert + O(\lambda^2 + \lambda\varepsilon)\,.
    \end{align*}
    By construction of $\widehat U$ we have $0 = (I-\widehat U\widehat U)\nabla_U\mathcal{L}(W(t_0)) = (I-\widehat U\widehat U)\nabla_W\mathcal{L}(W(t_0))V_0$, hence
    \begin{align*}
        \Vert (I-\widehat U\widehat U^{\top}) \mathcal{V}(t) \Vert \leq O(\lambda^2 + \lambda\varepsilon)\,.
    \end{align*}
    From this, we directly conclude
    \begin{align*}
        \Vert (I-\widehat U\widehat U^{\top}) W(t_1) \Vert = \Vert (I-\widehat U\widehat U^{\top}) (W(t_0) + \int_{t_0}^{t_1} \mathcal{V}(s)\,ds) \Vert = O(\lambda^3 + \lambda^2\varepsilon)\,.
    \end{align*}
    An analogous derivation for the co-range gives
    \begin{align*}
         \Vert W(t_1) - \widehat U\widehat U^{\top} W(t_1) \widehat V\widehat V^{\top}\Vert \leq\,& \Vert (I - \widehat U\widehat U^{\top}) W(t_1) \Vert + \Vert W(t_1) (I - \widehat V\widehat V^{\top}) \Vert \\  
         =\,& O(\lambda^3 + \lambda^2\varepsilon)\,.
    \end{align*}
    Next, we need to bound 
    \begin{align}
        \Vert \widehat U\widehat U^{\top} W(t_1) \widehat V\widehat V^{\top} - \widehat U\widehat S^1\widehat V^{\top} \Vert \leq\,& \Vert \widehat U^{\top} W(t_1) \widehat V - \widehat S^1 \Vert\,.
    \end{align}
    We note that from {Eq.} \eqref{eq:leapprox_1} we have with $W_0 := W(t_0)$ and $\mathcal{V}_0 := \mathcal{V}(t_0)$
    \begin{align*}
        \widehat U^{\top} W(t_1) \widehat V =\,& \widehat U^{\top} (W_0+\lambda(1-\gamma) \mathcal{V}_0 - \lambda^2 P(W_0)\nabla_W\mathcal{L}(W_0)) \widehat V + O(\lambda^2 + \lambda \varepsilon) \\
        =\,&  \bar S-\lambda\gamma \bar S_{\mathcal{V}} - \lambda \widehat U^{\top}\nabla_W\mathcal{L}(W_0) \widehat V + O(\lambda^2 + \lambda \varepsilon)\,,
    \end{align*}
    where $\bar S = \widehat U^{\top} W_0\widehat V$ and $\bar S_{\mathcal{V}} = \widehat U^{\top} \mathcal{V}_0\widehat V$. By definition of the $S$-update of Algorithm~\ref{alg:adam_dlrt} we have
    \begin{align*}
        \widehat S^1 = \bar S + \lambda (1-\gamma) \bar S_{\mathcal{V}} - \lambda^2 \nabla_{\bar S}\mathcal{L}(\widehat U \bar S \widehat V^{\top})\,.
    \end{align*}
    Thus, since $\nabla_{\bar S}\mathcal{L}(\widehat U \bar S \widehat V^{\top}) = U^{\top}\nabla_W\mathcal{L}(W_0) \widehat V$ we have $\Vert \widehat U^{\top} W(t_1) \widehat V - \widehat S^1 \Vert = O(\lambda^2 + \lambda \varepsilon)$ and therefore the local error is bounded by
    \begin{align*}    
        \Vert{W(t_1)-W^r_1}\Vert \leq\,& \Vert W(t_1) - \widehat U\widehat U^{\top} W(t_1) \widehat V\widehat V^{\top}\Vert+\Vert \widehat U\widehat U^{\top} W(t_1) \widehat V\widehat V^{\top} - \widehat U\widehat S^1\widehat V^{\top} \Vert\\
        =\,& O(\lambda^2 + \lambda \varepsilon)\,.
    \end{align*}
    From the truncation tolerance $\vartheta$, the bound on the truncation of $\mathcal{V}$, and the stability of the exact flow, we can obtain the desired error bound for the global error using Lady Windermere's fan.
\end{proof}
We remark, that we can always ensure that condition $\Vert \widehat U_t\widehat U_t^{\top}\mathcal{V}_t\widehat V_t\widehat V_t^{\top} - U_tU_t^{\top}\mathcal{V}_tV_tV_t^{\top}\Vert\leq\widehat \vartheta$, is fulfilled for a user determined $\widehat \vartheta$, e.g. $\widehat \vartheta=\vartheta$,  by increasing the new rank $r_1$ in the truncation step of \Cref{alg_helper} if necessary. {However, since $\mathcal{V}\rightarrow 0$ when the method reaches a steady state, the effect of this error term is expected to be limited. We remark that the main motivation to present Theorem~\ref{th:robust_error_bound} is to rigorously demonstrate the robust treatment of stiff terms in the gradient flow~\eqref{eq:momentum_efficient}. The main component in our construction of the algorithm, which removes these stiff terms in our error bound, is the construction of the augmented basis matrices $\widehat U$ and $\widehat V$.  }

\begin{theorem}\label{th:failedConv}
    The conventional low-rank gradient flow equations \eqref{eq:vanilla_grad_flow} can fail to converge to a point fulfilling $P(W)\nabla_W \mathcal{L}(W)=0$.
\end{theorem}
\begin{proof}
    The potential lack of convergence can be proven with a counter-example. Consider a state where the momentum term is zero, i.e., $\mathcal{V}=0$, by choosing the momentum factors as $U_{\mathcal{V}} = -U$, $S_{\mathcal{V}} = 0$, and $V_{\mathcal{V}} = V$. Now, consider any weight matrix $W$ that is not a low-rank optimum, meaning $P(W)\nabla_W \mathcal{L}(W) \neq 0$, but for which the naive update term in Eq. \eqref{eq:naiveUppdateFlowMomentum} happens to be zero: $\hat{P}(W,\mathcal{V})\nabla_W \mathcal{L}(W) = 0$.
    In this scenario, the naive gradient flow equations would identify $(W, \mathcal{V})$ as a stationary point, since $\dot{W}=0$ and $\dot{\mathcal{V}}=0$. However, this point is not a valid low-rank optimum because the true projected gradient $P(W)\nabla_W \mathcal{L}(W)$ is non-zero. In contrast, $(W, \mathcal{V})$ is not a stationary point of (4) and the gradient flow (4) will provably drive the system to a state where $P(W)\nabla_W \mathcal{L}(W)=0$.
\end{proof}
We remark that a large class of matrices fulfills $P(W)\nabla_W \mathcal{L}(W) \neq 0$ and $\hat{P}(W,\mathcal{V})\nabla_W \mathcal{L}(W) = 0$. For instance, any $W=USV^\top$ where the gradient is non-zero in the range of $V$ but zero in the range of $U$ (i.e., $U^\top\nabla_W \mathcal{L}(W) = 0$ but $\nabla_W \mathcal{L}(W)V \neq 0$) would satisfy this. This can be easily verified: Since $\mathcal{V}=0$, we have
\begin{align*}
    \hat{P}(W,\mathcal{V})\nabla_W \mathcal{L}(W) = UU^\top\nabla_W \mathcal{L}(W)VV^\top = 0\,,
\end{align*}
but
\begin{align*}
P(W)\nabla_W \mathcal{L}(W) = \nabla_W \mathcal{L}(W)VV^{\top}\neq 0\,.
\end{align*}

\section{Details to the numerical experiments of this work}\label{app_experiments}
{It is important to note that the accuracy-vs-compression trade-off varies by application. While low-rank methods excel in finetuning and transfer learning tasks (sometimes even improving upon the baseline), pre-training a network from scratch on a complex dataset often involves balancing memory savings against a potential drop in accuracy.}

\subsection{ImageNet-1k, UCM and Cifar Benchmarks}\label{sec_details_UCM}

\subsubsection{Network architecture and training details}\label{sec_network_training_details_vision}
In this paper, we use the pytorch implementation for neural network training. We take pretrained weights from the imagenet1k dataset as initialization, except for the long-term training study using ViT-small, which is randomly initialized. The data-loaded randomly samples a batch for each batch-update which is the only source of randomness in our training setup. Below is an overview of the used network architectures
\begin{itemize}[leftmargin=*, nosep]
    \item VGG16 is a deep convolutional neural network architecture that consists of 16 layers, including 13 convolutional layers and 3 fully connected layers.  
    \item VGG11 is a convolutional neural network architecture similar to VGG16 but with fewer layers, consisting of 11 layers: 8 convolutional layers and 3 fully connected layers. It follows the same design principle as VGG16, using small 3×3 convolution filters and 2×2 max-pooling layers. 
    %\item ResNet18 is a deep convolutional neural network architecture that features 18 layers, including residual connections designed to alleviate the vanishing gradient problem. It employs a series of residual blocks, enabling training of much deeper networks.
    %\item ResNet34 extends the ResNet18 architecture with 34 layers, maintaining the use of residual blocks.
    \item ViT-B.16 is a Vision Transformer with $16\times16$ patch size, a deep learning architecture that leverages transformer models for image classification tasks.
    \item ViT-small is a compact vision transformer with patch size $8 \times 8$, and an embedding dimension of 512. The model comprises six attention layers, each equipped with two heads, followed by a ResNet block and a dropout layer. 
    \item {ViT-L.32 is  a Vision Transformer with 32x32 patch size, a deep learning architecture that leverages transformer models for image classification tasks. We use the Imagenet21k weights from the huggingface endpoint google/vit-large-patch32-224-in21k as weight initialization.}
\end{itemize}

The full training setup is described in \Cref{tab_UCM_trainings}.  We train DLRT with the same hyperparameters as the full-rank baseline models. It is known \cite{schotthöfer2024GeoLoRAgeometricintegrationparameter} that DLRT methods are robust w.r.t. common hyperparameters as learning rate, and batch-size, and initial rank. 
The truncation tolerance $\tau$ is chosen per an initial parameter study. These values are are similar to default values reported in recent literature \cite{schotthöfer2024federateddynamicallowranktraining, schotthöfer2025dynamicallowrankcompressionneural, singh2021analytic}.
In general, there is a trade-off between target compression ratio and accuracy, as illustrated e.g. in \cite{schotthofer2022low} for matrix-valued and \cite{singh2021analytic} for tensor-valued (CNN) layers.

\begin{table}[t]
\centering
\caption{Training hyperparameters for the UCM, Cifar10, Cifar100 and ImageNet1k Benchmark. The first set hyperparameters apply to both DLRT and baseline training, and we train DLRT with the same hyperparameters as the full-rank baseline models. The second set of hyper-parameters is specific to DLRT. The DLRT hyperparameters are selected by an initial parameter sweep. We choose the DLRT truncation tolerance relative to the Frobenius norm of $\augS$, i.e. $\vartheta=\tau \|\augS\|_F$, as suggested in \cite{schotthofer2022low}.}
\label{tab_UCM_trainings}
\begin{tabular}{lccccc}
\toprule
\textbf{Hyperparameter} & \textbf{VGG16} & \textbf{VGG11} & \textbf{ViT-B.16} & \textbf{ViT-small} & \textbf{ViT-L.32} \\
\midrule
Batch Size (UCM)                  & 16  & 16  & 16   & n/a & n.a. \\
Batch Size (Cifar10)              & 128 & 128 & 128  & 256 & n.a. \\
Batch Size (Cifar100)             & 30  & 30  & 20   & n.a & n.a. \\
Batch Size (ImageNet)             & n.a & n.a & n.a  & n.a & 256 \\
Learning Rate                     & 0.001 & 0.001 & 0.001 & 0.0001 & 0.001 \\
Number of Epochs (UCM, Cifar10)  & 20 & 20 & 5 & 450 &  n.a  \\
Number of Epochs (Cifar100)      & 30 & 30 & 20 & n.a & n.a \\
Number of Epochs (ImageNet1k)      &  n.a  &  n.a  &  n.a  & n.a & 10 \\
L2 Regularization                  & 0 & 0 & 0.001 & 0.01 & 0.0001 \\
Optimizer                          & AdamW & AdamW & AdamW & Adam & AdamW \\
\midrule
DLRT rel. truncation tolerance $\tau$ & 0.1 & 0.05 & 0.08 & 0.05 & 0.013 \\
Coefficient Steps $s_*$             & 10  & 10  & 10   & 75 & 75 \\
Initial Rank                        & 150 & 150 & 150  & 200 & 200 \\
\midrule
Parameters                          & 138M & 132M & 86M & 50M & 304M \\
\bottomrule
\end{tabular}
\end{table}

\subsubsection{UCM Data}
The UC Merced (UCM) Land Use Dataset~\cite{10.1145/1869790.1869829} is a standard benchmark in remote sensing and computer vision. It consists of 2,100 high-resolution aerial RGB images, each of size $256 \times 256$ pixels, organized into 21 land use classes with 100 images per class.

We normalize the training and validation data using channel-wise means $[0.485, 0.456, 0.406]$ and standard deviations $[0.229, 0.224, 0.225]$. Convolutional neural networks (CNNs) are applied directly to the original $256 \times 256$ image resolution. For the Vision Transformer (ViT), the input images are resized to $224 \times 224$ pixels within the data pipeline.

\subsubsection{CIFAR-10 Data}\label{sec_details_Cifar10}
The CIFAR-10 dataset comprises 60,000 RGB images of size $32 \times 32$ pixels, uniformly distributed across 10 object classes.

We apply standard data augmentation techniques to the training set, including random horizontal flipping followed by normalization with mean $[0.4914, 0.4822, 0.4465]$ and standard deviation $[0.2470, 0.2435, 0.2616]$. The test set is only normalized. The same augmentation strategy is applied to CIFAR-100, using mean $[0.5071, 0.4867, 0.4408]$ and standard deviation $[0.2673, 0.2564, 0.2762]$.

CNNs are trained on the original $32 \times 32$ resolution, while ViT models receive images resized to $224 \times 224$ through the data pipeline.

\subsubsection{ImageNet-1k Data}
{The ImageNet dataset consists of 1000 classes and over 1.2 million RGB training images, with a standard resolution of $224 \times 224$ pixels. We follow the standard data augmentation pipeline for ImageNet, which includes a random resized crop to $224 \times 224$, and normalization using mean $[0.5, 0.5, 0.5]$ and standard deviation $[0.5, 0.5, 0.5]$. The test set is only resized and center-cropped to $224 \times 224$, followed by normalization.}

\subsubsection{Additional Results - Transfer Learning} 
\begin{wraptable}{r}{0.4\textwidth}
\centering
\caption{Results on ImageNet-1k with ViT-L.32 (304M parameters). Compression rate (c.r.) is reported in percent.}
\label{tab:imagenet}
\resizebox{0.4\textwidth}{!}
{%
\begin{tabular}{lccc}
\toprule
\textbf{Method} & \textbf{c.r. [\%]} & \textbf{Top-1 Acc. [\%]} & \textbf{Top-5 Acc. [\%]} \\
\midrule
Baseline       & 0     & \textbf{74.37} & {92.20} \\
\midrule
\Cref{alg:adam_dlrt}   & 61.45 & 72.27 & \textbf{90.19} \\
LoRA Pretrain  & 60.00 & 63.20 & 84.81 \\
\bottomrule
\end{tabular}
}
\end{wraptable}

\begin{figure}[t]
\begin{subfigure}{0.49\linewidth}
        \centering
    \includegraphics[width=\linewidth]{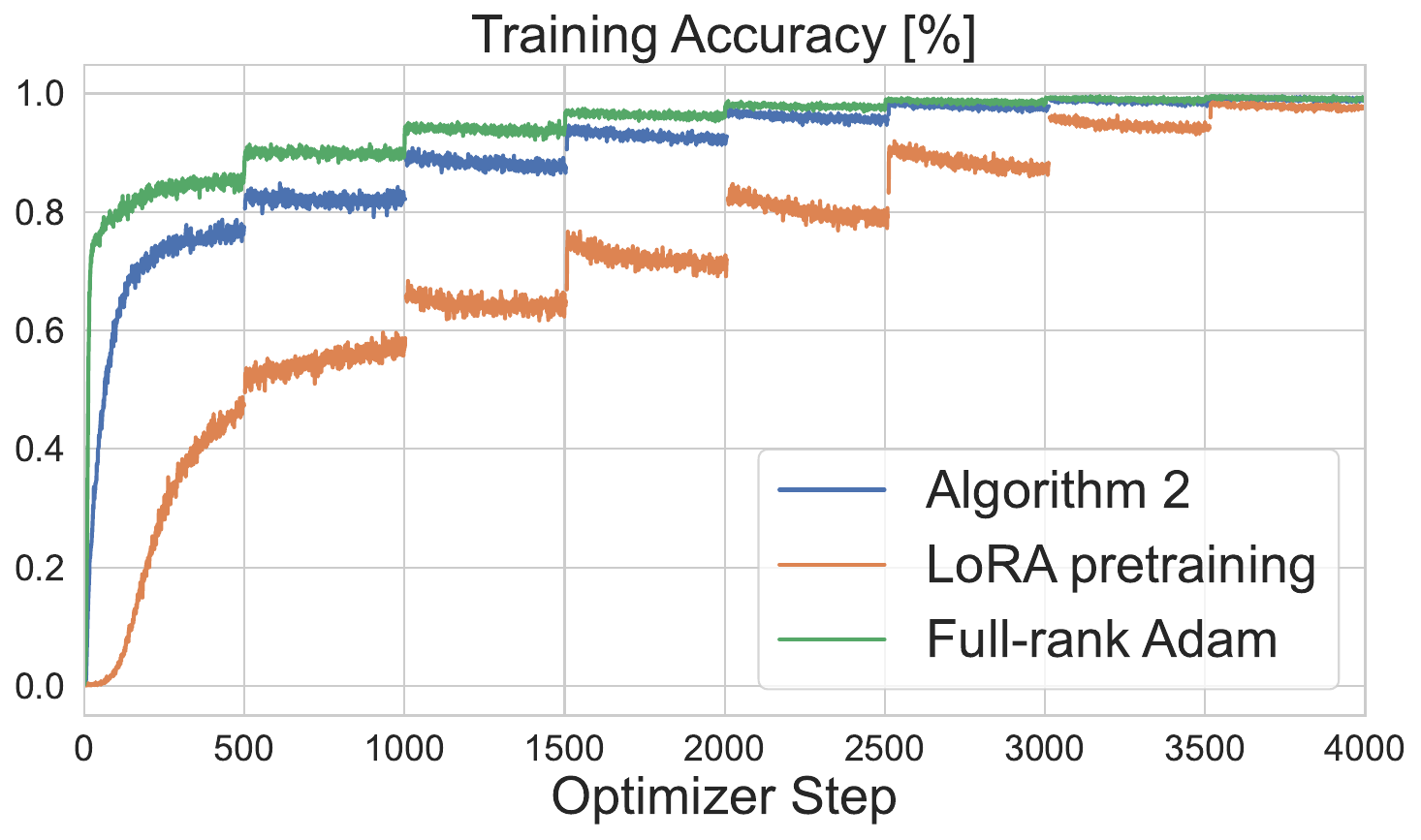}
    % \caption{Validation accuracy at the end of each epoch.}
    % \label{fig:enter-label}
\end{subfigure}
\begin{subfigure}{0.49\linewidth}
    \centering
    \includegraphics[width=\linewidth]{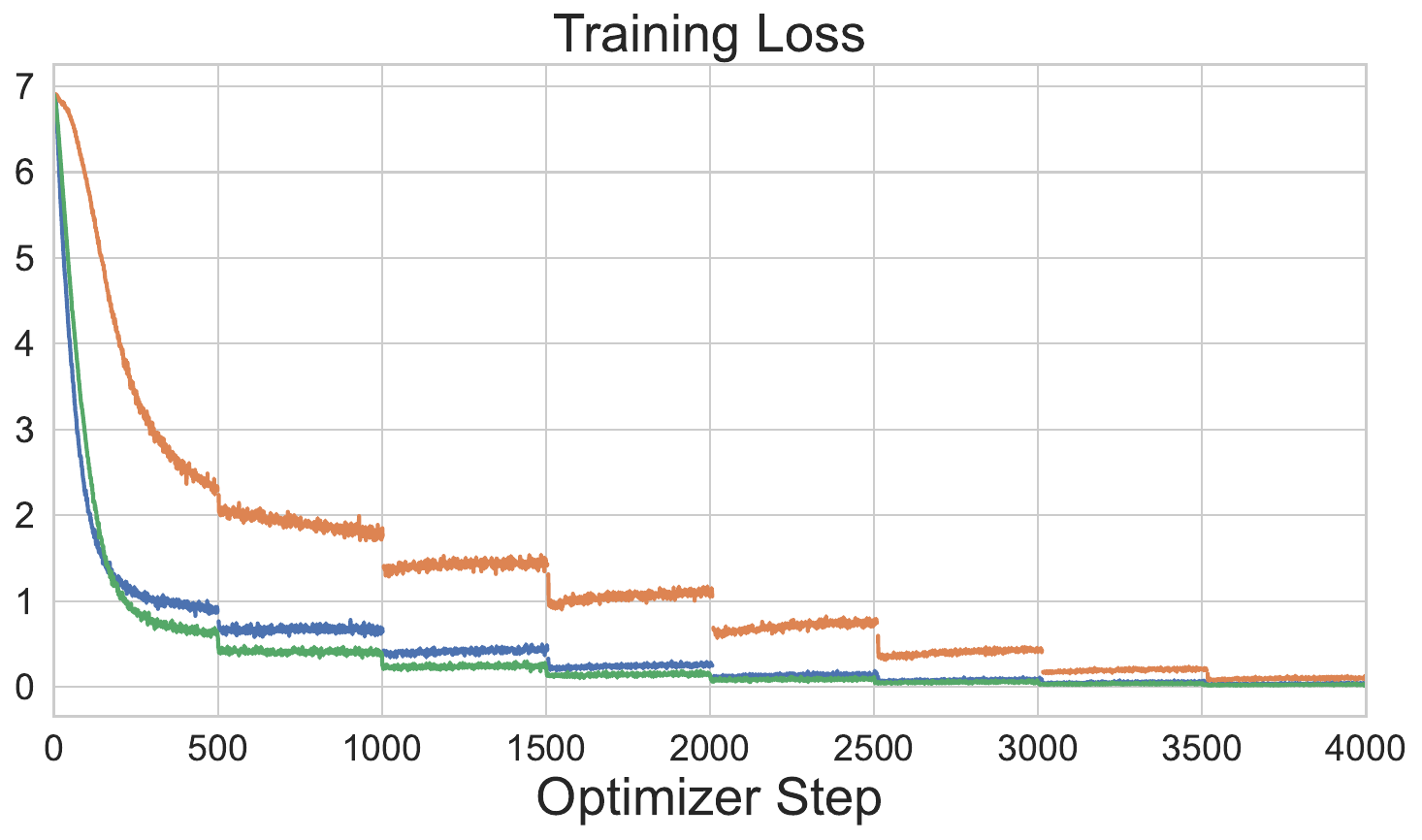}
   % \caption{Validation loss at the end of each epoch.}
   %  \label{fig:enter-label}
\end{subfigure}
\caption{{ViT-L.32 on ImageNet1k, pretrained from scratch in low-rank and full-rank baseline format for 4000 iterations. Training loss and accuracy of \Cref{alg:adam_dlrt} is close to the full-rank baseline, whereas LoRA pretraining struggles to converge within the training time budget.}
}\label{fig_conv_behavior_imgnet1k}
\vspace{-0.27cm}
\end{figure}

\paragraph{ViT-L.32 on ImageNet1k} We repeat the experimental setup on ViT-L.32 on the ImageNet-1k dataset, where ViT-L.32 is initialized with a Huggingface Imagenet21K checkpoint. We compare the baseline model, LoRA-based simultaneous descent pretraining, and  \Cref{alg:adam_dlrt} in Table~\ref{tab:imagenet}. We observe that  \Cref{alg:adam_dlrt} is able to recover the baseline accuracy up to a small margin whereas LoRA-based training exhibits decreased Top-1 and Top-5 accuracy.
Finally, we remark that the slightly lower compression rate is expected since the hidden dimension of ViT-L.32 ($1024$) is close to the number of ImageNet classes ($1000$), thus there is less redundancy in the model compared to other reported benchmarks.

\subsubsection{Additional Results - Transfer Learning: Low-Rank Heavyball Method }
\begin{wraptable}{r}{0.4\textwidth}
\centering
\caption{Results on Cifar10 with VGG16 using low-rank Heavyball SGD. Compression rate (c.r.) is reported in percent.}
\label{tab:heavyball}
\resizebox{0.4\textwidth}{!}
{%
\begin{tabular}{lccc}
\toprule
\textbf{Method} & \textbf{c.r. [\%]} & \textbf{Acc. [\%]}\\
\midrule
Baseline       & 0     & 78.98 \\
\midrule
\Cref{alg:heavy_ball_dlrt}   & 94.35 & 	\textbf{79.01}  \\
LoRA transfer learning  & 93.72 & 75.12  \\
\bottomrule
\end{tabular}
}
\end{wraptable}
\paragraph{VGG16 on Cifar10}
We consider VGG16 on Cifar10 with Heavyball SGD using the same hyperparameters as described in \Cref{sec_network_training_details_vision}. We choose $\gamma$=0.9 and train a (full-rank) baseline, LoRA pretrain and \Cref{alg:heavy_ball_dlrt}. The compression rate of LoRA pretrain is fixed to match the final compression rate of \Cref{alg:heavy_ball_dlrt}. In \Cref{tab:heavyball}, we observe similar performance of \Cref{alg:heavy_ball_dlrt} to the Baseline as we saw for \Cref{alg:adam_dlrt} to the Adam Baseline, whereas LoRA pretrain exhibits a slight drop in accuracy.

\subsubsection{Additional Results - Low-Rank Pretraining}
\paragraph{ViT-small on Cifar10}

We consider a compact Vision Transformer architecture for the CIFAR-10 dataset, see \Cref{sec_details_UCM} for details on training and architecture. We compare baseline full-rank training with LoRA pretraining, \Cref{alg:adam_dlrt}, and the naive implementation of Adam with DLRT \citep{schotthofer2022low}. Instead of low-rank finetuning, we pretrain the network from scratch and initialize the weights with a normal distribution. The low-rank methods factorize the fully-connected layers, while keeping the attention layers, which are typically high-rank, in baseline format. The LoRA rank is chosen to match the final compression rate of the rank adaptive naive DLRT and \Cref{alg:adam_dlrt} method, which achieves a compression rate of $67\%$. Remark that the compression rate is lower, since the attention matrices remain full-rank. 

\begin{figure}
\begin{subfigure}{0.33\linewidth}
        \centering
    \includegraphics[width=\linewidth]{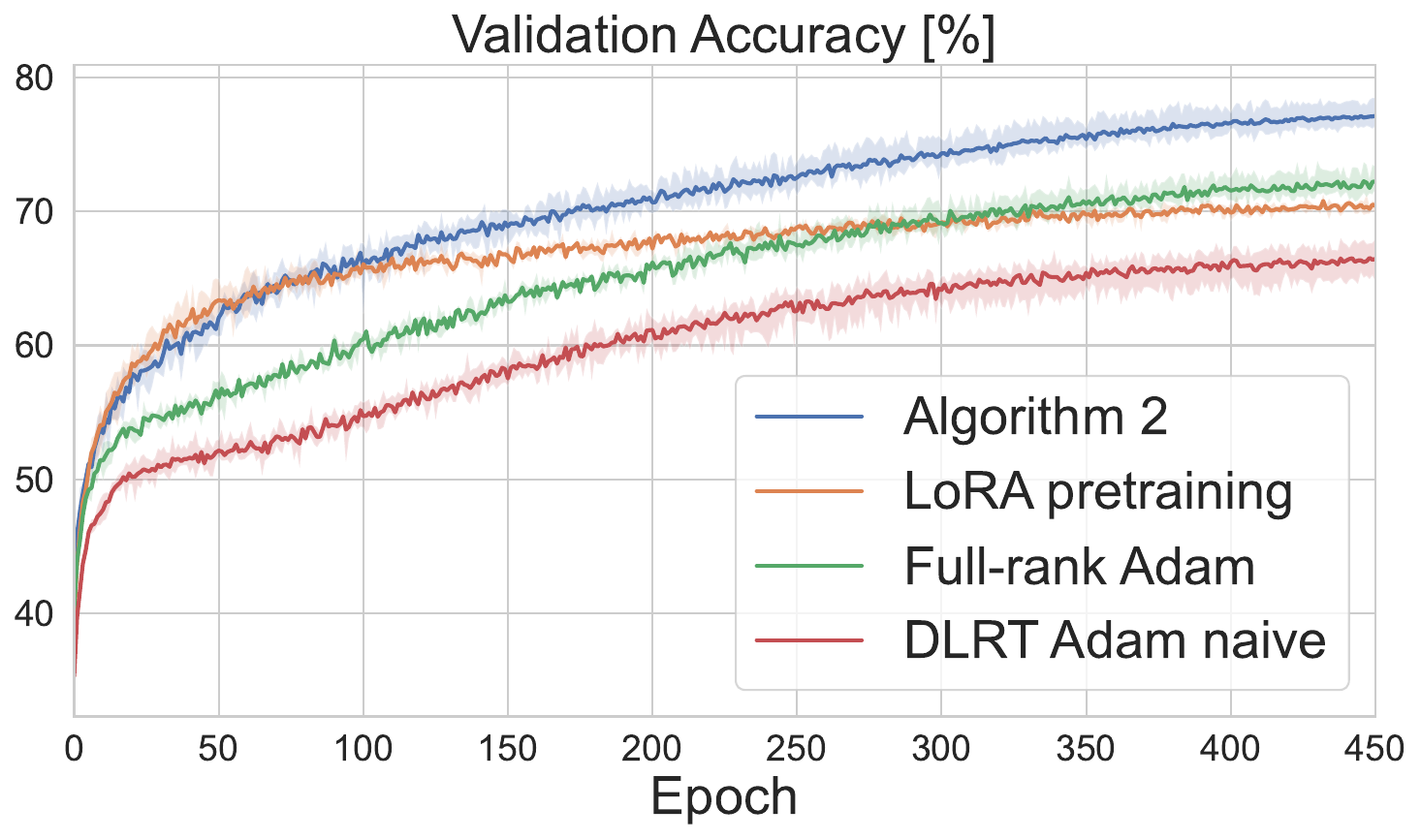}

\end{subfigure}
\begin{subfigure}{0.33\linewidth}
    \centering
    \includegraphics[width=\linewidth]{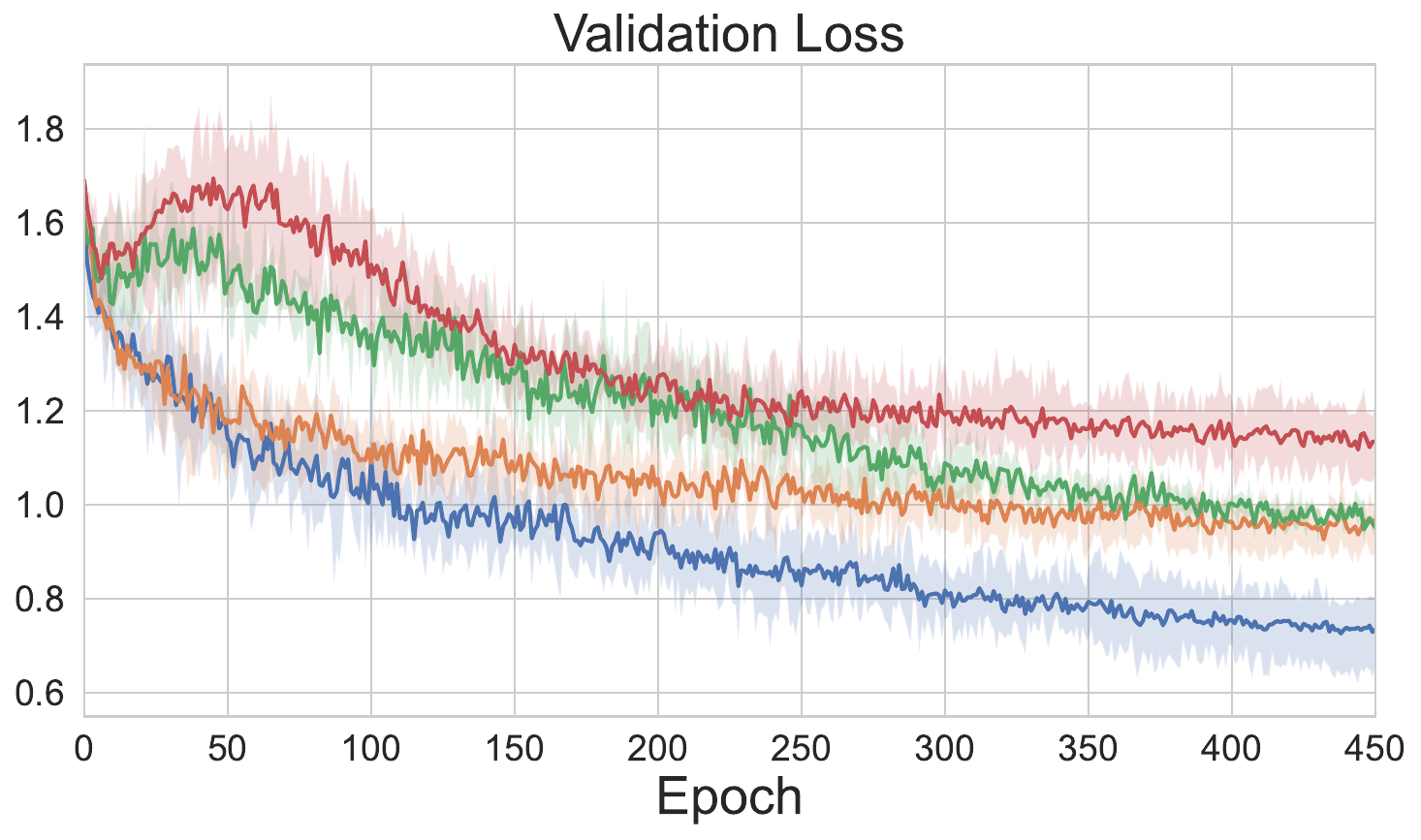}

\end{subfigure}
\begin{subfigure}{0.33\linewidth}
    \centering
    \includegraphics[width=\linewidth]{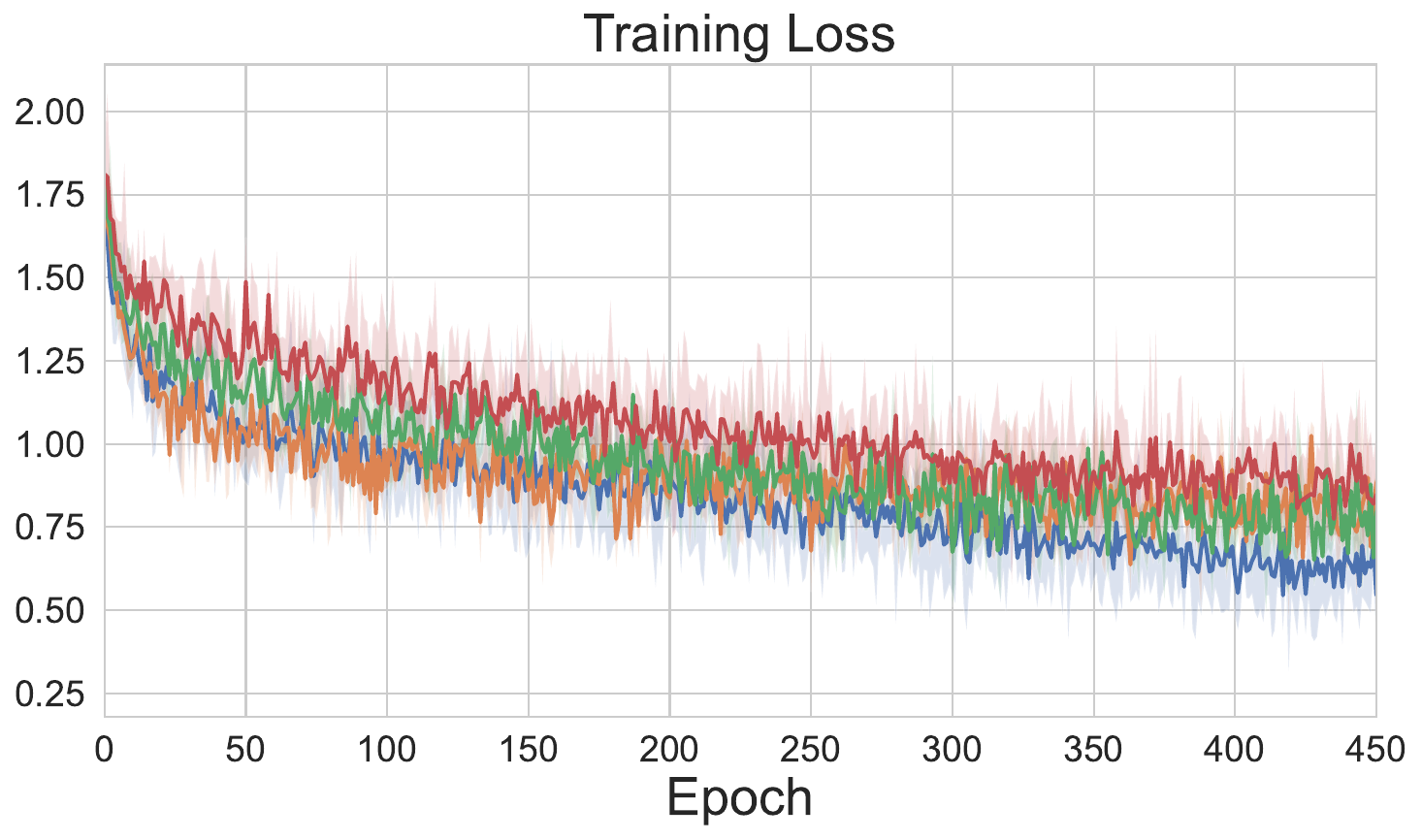}
   % \caption{Validation loss at the end of each epoch.}
   %  \label{fig:enter-label}
\end{subfigure}
\caption{ViT-small on Cifar10, pretrained from scratch in low-rank and full-rank baseline format for 450 epochs. Median trajectory over 5 runs. \Cref{alg:adam_dlrt} and LoRA pretraining initially converge faster than the full-rank baseline. {After the initial warm-up phase, \Cref{alg:adam_dlrt} exhibits a steeper convergence slope than LoRA. Moreover,} \Cref{alg:adam_dlrt} achieves lower loss and higher validation accuracy than LoRA, even surpassing the baseline. A naive DLRT implementation with Adam leads to slower convergence and over 10\% drop in validation accuracy.
}\label{fig_conv_behavior}
\vspace{-0.27cm}
\end{figure}

The goal of this test is to compare the long-term convergence behavior of all four methods, presented in \Cref{fig_conv_behavior}. We observe that \Cref{alg:adam_dlrt} and standard LoRA pretraining first converge faster than the baseline training. The non-orthogonal bases $A,B$ of LoRA and the corresponding non-orthogonal projection onto the low-rank manifold cause LoRA to plateau, whereas \Cref{alg:adam_dlrt} achieves lower loss values and higher validation accuracies, even surpassing the full-rank baseline in this test case. Naive implementation of DLRT with the Adam optimizer causes a more than 10\% reduction in validation accuracy and slower convergence.

\subsection{GLUE Benchmark}\label{app_glue}
\subsubsection{Dataset description}
We present the benchmark overview in \Cref{tab_glue_overview}.
\begin{table}[h]
\caption{Summary of GLUE benchmark tasks}
\label{tab_glue_overview}
\centering
\begin{tabular}{@{}l l r r r r l@{}}
\toprule
\textbf{Corpus} & \textbf{Task} & \textbf{\#Train} & \textbf{\#Dev} & \textbf{\#Test} & \textbf{\#Label} & \textbf{Metrics} \\ 
\midrule
\multicolumn{7}{c}{\textbf{Single-Sentence Classification (GLUE)}} \\ 
CoLA & Acceptability & 8.5k & 1k & 1k & 2 & Matthews corr \\
SST2  & Sentiment & 67k & 872 & 1.8k & 2 & Accuracy \\ 
\midrule
\multicolumn{7}{c}{\textbf{Pairwise Text Classification (GLUE)}} \\ 
MNLI & NLI & 393k & 20k & 20k & 3 & Accuracy \\
RTE & NLI & 2.5k & 276 & 3k & 2 & Accuracy \\
QQP & Paraphrase & 364k & 40k & 391k & 2 & F1 \\
MRPC & Paraphrase & 3.7k & 408 & 1.7k & 2 & Accuracy \\
QNLI & QA/NLI & 108k & 5.7k & 5.7k & 2 & Accuracy \\
\midrule
\multicolumn{7}{l}{\textbf{Text Similarity (GLUE)}} \\ 
STS-B & Similarity & 7k & 1.5k & 1.4k & 1 & Pearson/Spearman cor \\
\bottomrule
\end{tabular}
\end{table}
We evaluate \ALGNAME{} against several recent finetuning methods on the General Language Understanding Evaluation (GLUE) benchmark~\citep{wang2019gluemultitaskbenchmarkanalysis}.
GLUE is a standard benchmark comprising a diverse set of natural language understanding tasks that assess a model’s ability to comprehend and process human language. It provides a broad evaluation by including tasks covering various linguistic aspects such as entailment, sentiment, and semantic similarity. The benchmark comprises the following nine tasks:
\begin{itemize}[leftmargin=*,noitemsep,topsep=0em]
    \item \textbf{CoLA (Corpus of Linguistic Acceptability)}: Determines if a sentence is grammatically acceptable.
    \item \textbf{SST-2 (Stanford Sentiment Treebank)}: A binary sentiment classification task distinguishing between positive and negative sentiment.
    \item \textbf{MRPC (Microsoft Research Paraphrase Corpus)}: Identifies whether two given sentences are paraphrases.
    \item \textbf{STS-B (Semantic Textual Similarity Benchmark)}: Measures the semantic similarity of two sentences on a continuous scale from 1 to 5.
    \item \textbf{QQP (Quora Question Pairs)}: Assesses whether two questions are semantically equivalent.
%    \item \textbf{MNLI (Multi-Genre Natural Language Inference)}: Classifies sentence pairs into entailment, contradiction, or neutral.
    \item \textbf{QNLI (Question Natural Language Inference)}: Determines if a context sentence correctly answers a question.
    \item \textbf{RTE (Recognizing Textual Entailment)}: A binary entailment classification task.
    \item \textbf{Specific Focus:} MRPC (Microsoft Research Paraphrase Corpus)
\end{itemize}

The F1 score, used for evaluation, is computed from the precision $P$ and recall $R$ as follows.
The precision $P$ is defined as
\begin{align}
    P := \frac{P_T}{P_T + P_F},
\end{align}
where $P_T$ denotes the number of true positives and $P_F$ the number of false positives. The recall $R$ is given by
\begin{align}
    R := \frac{P_T}{P_T + N_F},
\end{align}
where $N_F$ represents the number of false negatives. The F1 score is then the harmonic mean of $P$ and $R$:
\begin{align}
    F1 := \frac{2PR}{P + R}.
\end{align}

\subsubsection{Reference implementations}
\textbf{Full Finetuning (FT)}: The standard approach in transfer learning, where the model is initialized with pre-trained weights and all parameters are updated via gradient descent.

\textbf{Bitfit~\citep{zaken2022bitfitsimpleparameterefficientfinetuning}}: Finetuning where only the bias terms are updated while all other parameters remain fixed.

\textbf{Adapter Tuning~\citep{pmlr-v97-houlsby19a,pfeiffer2021adapterfusionnondestructivetaskcomposition}}: Involves inserting two-layer adapter modules within transformer blocks. In~\citep{pmlr-v97-houlsby19a}, adapters are placed between the self-attention and feed-forward modules with a residual connection (denoted HAdapter). In~\citep{pfeiffer2021adapterfusionnondestructivetaskcomposition}, adapters are inserted after the feed-forward and layer normalization modules (denoted PAdapter), following the notation of~\citep{zhang2023adalora}.

\textbf{LoRA~\citep{hu2021lora}}: Applies low-rank additive updates to selected weight matrices, modeled as
\begin{align}
    \fz = \sigma\left(W_{\mathrm{pt}}\fx + \frac{\alpha}{r}AB^\top\fx\right),
\end{align}
where $A, B \in \mathbb{R}^{n \times r}$. We apply LoRA to the attention matrices $W_q$, $W_k$, $W_v$, and the feed-forward matrices $W_{f_1}$ and $W_{f_2}$. Learning rates and optimizers follow the setup in~\citep{zhang2023adalora}, Appendix D–F.

{{Results for FT, Bitfit, Adapter tuning, and LoRA in \Cref{tab_results} are reproduced from~\citep{zhang2023adalora}. The performance of DoRA, LoRA, LoRA+, and AdaLoRA is computed using the HuggingFace implementations of these adapters.}}

\textbf{DoRA~\citep{Mao2024DoRAEP}}: A low-rank adapter similar in structure to LoRA, but with normalized $AB$ matrices and an additional magnitude parameter. Unlike LoRA, DoRA initializes the adapter with the pre-trained weights $W_0$, rather than zero.

\textbf{LoRA+~\citep{hayou2024lora}}: Differs from LoRA in the assignment of learning rates: separate learning rates are used for $A$ and $B$, with a fixed ratio $\lambda_B/\lambda_A = 1.1$.

\textbf{AdaLoRA~\citep{zhang2023adalora}}: Introduces adaptive low-rank updates to selected weight matrices:
\begin{align}
    \fz = \sigma\left(W_{\mathrm{pt}}\fx + \frac{\alpha}{r}USV^\top\fx\right),
\end{align}
with frozen base weights $W_{\mathrm{pt}} \in \mathbb{R}^{n \times n}$, rank-$r$ adapters $U, V \in \mathbb{R}^{n \times r}$, and scaling matrix $S \in \mathbb{R}^{r \times r}$. The rank is determined using either SVD-based truncation or sensitivity analysis of the singular vectors. AdaLoRA is applied to $W_q$, $W_k$, $W_v$, $W_{f_1}$, and $W_{f_2}$ with an orthogonality regularization coefficient $\gamma = 0.1$.

{{When comparing to AdaLoRA, we align the total parameter budget with LoRA by setting the final budget $b^{(T)}$ to $576$, and initialize with $b^{(0)} = 1.5 \times b^{(T)}$.}}

{{We also compare AdaLoRA using budget schedules obtained via \Cref{alg:adam_dlrt}, ensuring that $b^{(T)}$ approximately matches the parameter count of the final models trained using \Cref{alg:adam_dlrt}.}}

\textbf{GeoLoRA~\citep{schotthöfer2024GeoLoRAgeometricintegrationparameter}}: GeoLoRA integrates the projected gradient flow \Cref{eq:grad_flow_dlr} in a parallelizable single-step scheme, including a rank adapative augmentation-truncation scheme as the proposed method. However, the method is only applicable for stochastic gradient descent, and not yet extended to momentum-based approaches. We use the hyperparameter choices reported in \citep{schotthöfer2024GeoLoRAgeometricintegrationparameter}.

We use the implementation of \citep[Appendix C]{zhang2023adalora} to compute the results for the presented reference methods. 
We set the exponential moving average parameters $\beta_1$ and $\beta_2$ of AdamW
as their pytorch default value. We select the learning rates as denoted in \Cref{tab_glue_hyperparam}, selected by an initial hyperparameter sweep.

We implement \ALGNAME{} as similar as possible to the reference models to achieve a fair comparison. That is, we add an adapter of the form $\fz = \sigma(W_{\mathrm{pt}}\fx + USV^\top\fx)$  to the key $W_k$, query $W_q$ and value $W_v$ matrices of all attention blocks, and to both feed-forward layers $W_{f_1}$ and $W_{f_2}$. For each adapter, we employ \Cref{alg:adam_dlrt} to update the layer weights and ranks.

\begin{table}[h!]
\centering
\caption{Hyper-parameter setup for the GLUE benchmark, determined by an initial hyperparameter sweep.}
\label{tab_glue_hyperparam}
\resizebox{\textwidth}{!}{
\begin{tabular}{l|ccccccc}
\toprule
\textbf{Dataset} & \textbf{Learning Rate} & \textbf{Batch Size} & \textbf{\# Epochs} & \textbf{$\tau$} & \textbf{init. rank}  & \textbf{Adapter dropout} & \textbf{weight decay}\\
\midrule
%MNLI   \\
RTE    & $1.2 \times 10^{-3}$   & 32 & 20 & 0.075 &10 & 0.01 & 0.01\\
QNLI   & $5 \times 10^{-4}$   & 64 & 5 & 0.05 &10 & 0.2 & 0.01\\
MRPC   & $1 \times 10^{-4}$   & 64 & 5 & 0.05 &10 & 0.15 & 0.05\\
QQP    & $1 \times 10^{-4}$   & 64 & 5 & 0.05 &10 & 0.15 & 0.05\\
SST-2  & $1 \times 10^{-4}$   & 64 & 10 & 0.05 &10 & 0.05 & 0.01\\
CoLA   & $5 \times 10^{-4}$   & 32 & 25 & 0.05 &10 & 0.1 & 0.01\\
STS-B & $1 \times 10^{-3}$   & 128 & 30 & 0.05 &10 & 0.05 & 0.1\\
\bottomrule
\end{tabular}
}
\end{table}

\subsection{Llama2 7b-chat-hf on BoolQ and PIQA}
{
\textbf{BoolQ} is a reading comprehension dataset consisting of naturally occurring yes/no questions paired with passages from Wikipedia. Questions are drawn from real Google search queries, and each is annotated with an answer by human raters, making it a benchmark for natural, open-domain question answering.
}

{
\textbf{PIQA} (Paragraph-level In-context QA) is a dataset designed for evaluating in-context learning in long-form reading comprehension. It provides paragraph-length passages with associated questions and answers, emphasizing models’ ability to extract relevant information from extended contexts rather than isolated sentences.
}

\begin{table}[h!]
\centering
\caption{Hyper-parameter setup for \Cref{alg:adam_dlrt} for the reasoning benchmark Table \ref{tab:Llama}, determined by an initial hyperparameter sweep.}
\label{tab_llama_hyperparam}
\resizebox{\textwidth}{!}{
\begin{tabular}{l|ccccccc}
\toprule
\textbf{Dataset} & \textbf{Learning Rate} & \textbf{Batch Size} & \textbf{\# Epochs} & \textbf{$\tau$} & \textbf{init. rank}  & \textbf{Adapter dropout} & \textbf{weight decay}\\
\midrule
BoolQ &  $1.76 \times e^{-4}$ & 12 & 3 & 0.0696   & 6 &0 &0.1\\
PIQA  &  $1.36 \times e^{-4}$ & 12 & 3 & 0.0838   & 6 &0 &0.1\\
\bottomrule
\end{tabular}
}
\end{table}

\begin{table}[h!]
\centering
\caption{Hyper-parameter setup for LoRA for the reasoning benchmark Table \ref{tab:Llama}, determined by an initial hyperparameter sweep.}
\label{tab_LoRA_llama_hyperparam}
\resizebox{\textwidth}{!}{
\begin{tabular}{l|ccccccc}
\toprule
\textbf{Dataset} & \textbf{Learning Rate} & \textbf{Batch Size} & \textbf{\# Epochs} & \textbf{$\tau$} & \textbf{init. rank}  & \textbf{Adapter dropout} & \textbf{weight decay}\\
\midrule
BoolQ &  $4.47 \times e^{-4}$ / $1.76 \times e^{-4}$ & 12 & 3 & None   & 6/10 &0 &0.1\\
PIQA  &  $2.04 \times e^{-4}$ / $1.36 \times e^{-4}$ & 12 & 3 & None   & 6/10 &0 &0.1\\
\bottomrule
\end{tabular}
}
\end{table}

\subsection{GPT2 on OpenWebText}
{
\textbf{OpenWebText} is an open-source dataset constructed as a replication of OpenAI’s WebText. It was created by scraping URLs shared on Reddit. The dataset contains web pages spanning diverse topics, filtered to remove duplicates and non-English text, and is commonly used as a large-scale corpus for training and evaluating language models.
}
\begin{table}[h!]
\centering
\caption{Hyperparameter configuration for pretraining GPT-2 (124M) on OpenWebText (see Table~\ref{tab:GPT2}).}
\label{tab_GPT2_hyperparam}
\resizebox{\textwidth}{!}{
\begin{tabular}{l|ccccccc}
\toprule
\textbf{Dataset} & \textbf{Learning Rate} & \textbf{Batch Size} & \textbf{\# iteration} & \textbf{$\tau$} & \textbf{init. rank}  & \textbf{Adapter dropout} & \textbf{weight decay}\\
\midrule
OpenWebText & $[ 6e^{-4}, 6e^{-5}]$ & 64 & 15 000 & 0.05  & 135 & 0 & 0.1\\
\bottomrule
\end{tabular}
}
\end{table}

\begin{figure}[t]
\begin{subfigure}{0.48\linewidth}
        \centering
    \includegraphics[width=\linewidth]{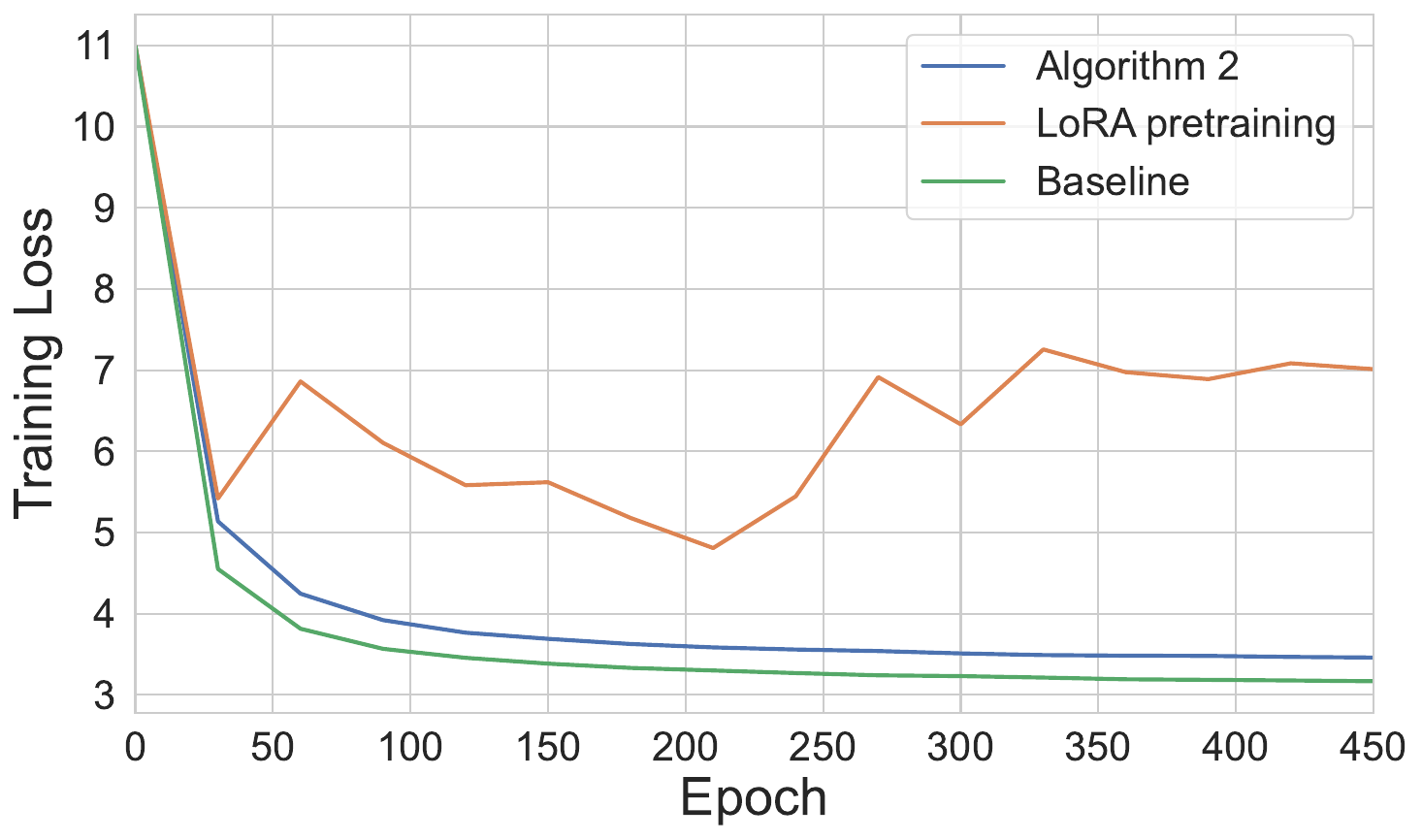}
    % \caption{Validation accuracy at the end of each epoch.}
    % \label{fig:enter-label}
\end{subfigure}
\begin{subfigure}{0.48\linewidth}
    \centering
    \includegraphics[width=\linewidth]{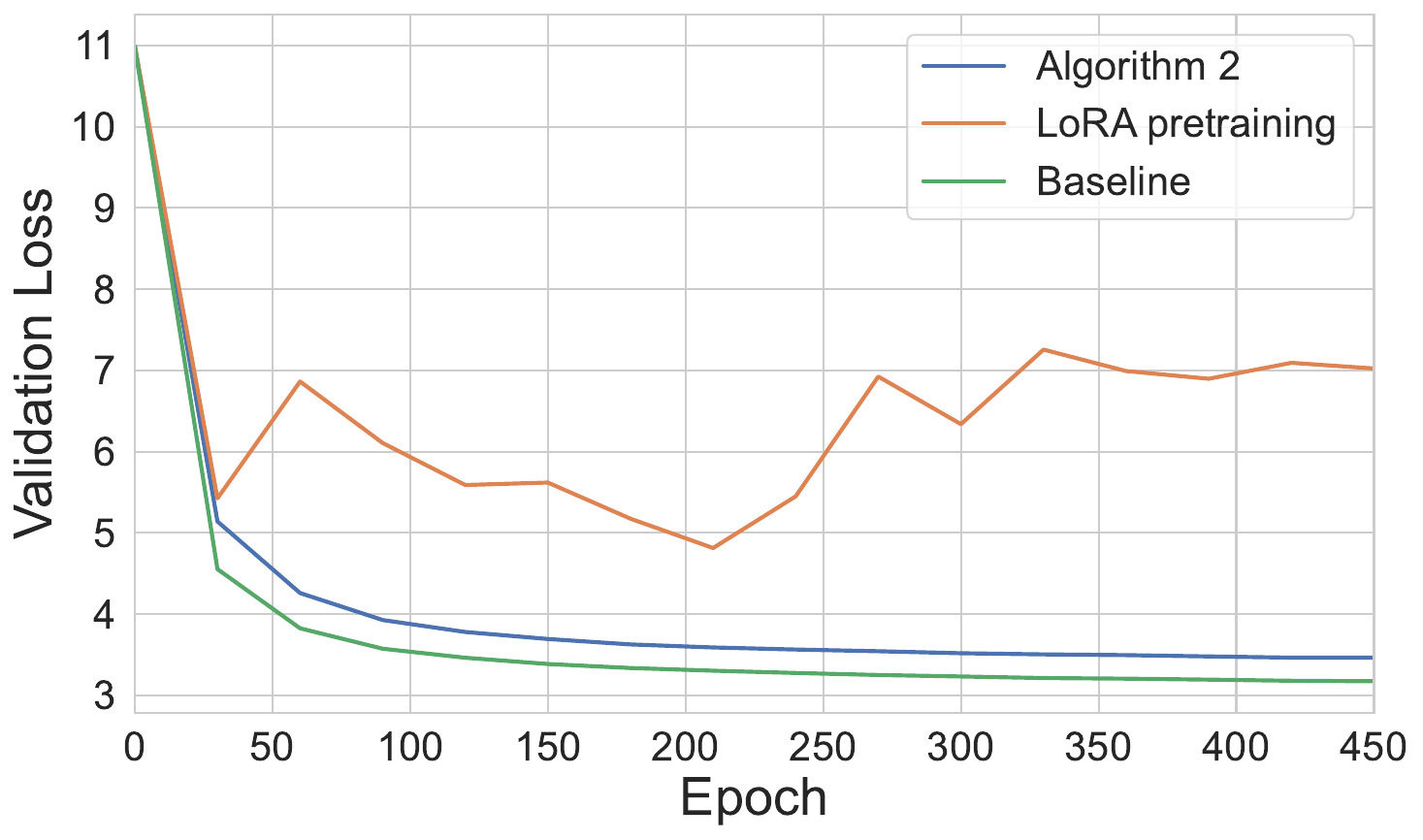}
   %\caption{}
   %  \label{fig:enter-label}
\end{subfigure}
\caption{GPT2 reproduction on OpenWebText, pretrained from scratch in low-rank, full-rank baseline and \Cref{alg:adam_dlrt} for 15000 iterations.
\Cref{alg:adam_dlrt} method significantly outperforms LoRA pretraining (best validation loss $3.4642$ vs.\ $4.8141$), while incurring only a moderate increase relative to the full-rank baseline ($3.4642$ vs.\ $3.2313$).}\label{fig_GPT2}
\vspace{-0.27cm}
\end{figure}

\subsection{Computational hardware}\label{sec_hardware}
All experiments in this paper are computed using workstation GPUs. Each training run used a single GPU, except for GPT-2 pretraining, which was performed on two NVIDIA H100 GPUs}. Specifically, we have used 5 NVIDIA RTX A6000, 3 NVIDIA RTX 4090, {and 2 NVIDIA H100.}

\newpage

%%%%%%%%%%%%%%%%%%%%%%%%%%%%%%%%%%%%%%%%%%%%%%%%%%%%%%%%%%%%
\section*{NeurIPS Paper Checklist}

%%% BEGIN INSTRUCTIONS %%%
The checklist is designed to encourage best practices for responsible machine learning research, addressing issues of reproducibility, transparency, research ethics, and societal impact. Do not remove the checklist: {\bf The papers not including the checklist will be desk rejected.} The checklist should follow the references and follow the (optional) supplemental material.  The checklist does NOT count towards the page
limit. 

Please read the checklist guidelines carefully for information on how to answer these questions. For each question in the checklist:
\begin{itemize}
    \item You should answer \answerYes{}, \answerNo{}, or \answerNA{}.
    \item \answerNA{} means either that the question is Not Applicable for that particular paper or the relevant information is Not Available.
    \item Please provide a short (1–2 sentence) justification right after your answer (even for NA). 
   % \item {\bf The papers not including the checklist will be desk rejected.}
\end{itemize}

{\bf The checklist answers are an integral part of your paper submission.} They are visible to the reviewers, area chairs, senior area chairs, and ethics reviewers. You will be asked to also include it (after eventual revisions) with the final version of your paper, and its final version will be published with the paper.

The reviewers of your paper will be asked to use the checklist as one of the factors in their evaluation. While "\answerYes{}" is generally preferable to "\answerNo{}", it is perfectly acceptable to answer "\answerNo{}" provided a proper justification is given (e.g., "error bars are not reported because it would be too computationally expensive" or "we were unable to find the license for the dataset we used"). In general, answering "\answerNo{}" or "\answerNA{}" is not grounds for rejection. While the questions are phrased in a binary way, we acknowledge that the true answer is often more nuanced, so please just use your best judgment and write a justification to elaborate. All supporting evidence can appear either in the main paper or the supplemental material, provided in appendix. If you answer \answerYes{} to a question, in the justification please point to the section(s) where related material for the question can be found.

IMPORTANT, please:
\begin{itemize}
    \item {\bf Delete this instruction block, but keep the section heading ``NeurIPS paper checklist"},
    \item  {\bf Keep the checklist subsection headings, questions/answers and guidelines below.}
    \item {\bf Do not modify the questions and only use the provided macros for your answers}.
\end{itemize}

%%% END INSTRUCTIONS %%%

\begin{enumerate}

\item {\bf Claims}
    \item[] Question: Do the main claims made in the abstract and introduction accurately reflect the paper's contributions and scope?
    \item[] Answer: \answerYes{} % Replace by \answerYes{}, \answerNo{}, or \answerNA{}.
    \item[] Justification: We describe our contribution in the introduction section.
    \item[] Guidelines:
    \begin{itemize}
        \item The answer NA means that the abstract and introduction do not include the claims made in the paper.
        \item The abstract and/or introduction should clearly state the claims made, including the contributions made in the paper and important assumptions and limitations. A No or NA answer to this question will not be perceived well by the reviewers. 
        \item The claims made should match theoretical and experimental results, and reflect how much the results can be expected to generalize to other settings. 
        \item It is fine to include aspirational goals as motivation as long as it is clear that these goals are not attained by the paper. 
    \end{itemize}

\item {\bf Limitations}
    \item[] Question: Does the paper discuss the limitations of the work performed by the authors?
    \item[] Answer: \answerYes{} % Replace by \answerYes{}, \answerNo{}, or \answerNA{}.
    \item[] Justification: We discuss the underlying assumptions of the method in the analysis of the corresponding theorems.
    \item[] Guidelines:
    \begin{itemize}
        \item The answer NA means that the paper has no limitation while the answer No means that the paper has limitations, but those are not discussed in the paper. 
        \item The authors are encouraged to create a separate "Limitations" section in their paper.
        \item The paper should point out any strong assumptions and how robust the results are to violations of these assumptions (e.g., independence assumptions, noiseless settings, model well-specification, asymptotic approximations only holding locally). The authors should reflect on how these assumptions might be violated in practice and what the implications would be.
        \item The authors should reflect on the scope of the claims made, e.g., if the approach was only tested on a few datasets or with a few runs. In general, empirical results often depend on implicit assumptions, which should be articulated.
        \item The authors should reflect on the factors that influence the performance of the approach. For example, a facial recognition algorithm may perform poorly when image resolution is low or images are taken in low lighting. Or a speech-to-text system might not be used reliably to provide closed captions for online lectures because it fails to handle technical jargon.
        \item The authors should discuss the computational efficiency of the proposed algorithms and how they scale with dataset size.
        \item If applicable, the authors should discuss possible limitations of their approach to address problems of privacy and fairness.
        \item While the authors might fear that complete honesty about limitations might be used by reviewers as grounds for rejection, a worse outcome might be that reviewers discover limitations that aren't acknowledged in the paper. The authors should use their best judgment and recognize that individual actions in favor of transparency play an important role in developing norms that preserve the integrity of the community. Reviewers will be specifically instructed to not penalize honesty concerning limitations.
    \end{itemize}

\item {\bf Theory Assumptions and Proofs}
    \item[] Question: For each theoretical result, does the paper provide the full set of assumptions and a complete (and correct) proof?
    \item[] Answer: \answerYes{} % Replace by \answerYes{}, \answerNo{}, or \answerNA{}.
    \item[] Justification:  We discuss the underlying assumptions of the method in the analysis of the corresponding theorems.
    \item[] Guidelines:
    \begin{itemize}
        \item The answer NA means that the paper does not include theoretical results. 
        \item All the theorems, formulas, and proofs in the paper should be numbered and cross-referenced.
        \item All assumptions should be clearly stated or referenced in the statement of any theorems.
        \item The proofs can either appear in the main paper or the supplemental material, but if they appear in the supplemental material, the authors are encouraged to provide a short proof sketch to provide intuition. 
        \item Inversely, any informal proof provided in the core of the paper should be complemented by formal proofs provided in appendix or supplemental material.
        \item Theorems and Lemmas that the proof relies upon should be properly referenced. 
    \end{itemize}

    \item {\bf Experimental Result Reproducibility}
    \item[] Question: Does the paper fully disclose all the information needed to reproduce the main experimental results of the paper to the extent that it affects the main claims and/or conclusions of the paper (regardless of whether the code and data are provided or not)?
    \item[] Answer: \answerYes{} % Replace by \answerYes{}, \answerNo{}, or \answerNA{}.
    \item[] Justification: All experimental details are provided in the appendix.
    \item[] Guidelines:
    \begin{itemize}
        \item The answer NA means that the paper does not include experiments.
        \item If the paper includes experiments, a No answer to this question will not be perceived well by the reviewers: Making the paper reproducible is important, regardless of whether the code and data are provided or not.
        \item If the contribution is a dataset and/or model, the authors should describe the steps taken to make their results reproducible or verifiable. 
        \item Depending on the contribution, reproducibility can be accomplished in various ways. For example, if the contribution is a novel architecture, describing the architecture fully might suffice, or if the contribution is a specific model and empirical evaluation, it may be necessary to either make it possible for others to replicate the model with the same dataset, or provide access to the model. In general. releasing code and data is often one good way to accomplish this, but reproducibility can also be provided via detailed instructions for how to replicate the results, access to a hosted model (e.g., in the case of a large language model), releasing of a model checkpoint, or other means that are appropriate to the research performed.
        \item While NeurIPS does not require releasing code, the conference does require all submissions to provide some reasonable avenue for reproducibility, which may depend on the nature of the contribution. For example
        \begin{enumerate}
            \item If the contribution is primarily a new algorithm, the paper should make it clear how to reproduce that algorithm.
            \item If the contribution is primarily a new model architecture, the paper should describe the architecture clearly and fully.
            \item If the contribution is a new model (e.g., a large language model), then there should either be a way to access this model for reproducing the results or a way to reproduce the model (e.g., with an open-source dataset or instructions for how to construct the dataset).
            \item We recognize that reproducibility may be tricky in some cases, in which case authors are welcome to describe the particular way they provide for reproducibility. In the case of closed-source models, it may be that access to the model is limited in some way (e.g., to registered users), but it should be possible for other researchers to have some path to reproducing or verifying the results.
        \end{enumerate}
    \end{itemize}

\item {\bf Open access to data and code}
    \item[] Question: Does the paper provide open access to the data and code, with sufficient instructions to faithfully reproduce the main experimental results, as described in supplemental material?
    \item[] Answer: \answerNo{} % Replace by \answerYes{}, \answerNo{}, or \answerNA{}.
    \item[] Justification:  We provide the open source code upon paper acceptance
    \item[] Guidelines:
    \begin{itemize}
        \item The answer NA means that paper does not include experiments requiring code.
        \item Please see the NeurIPS code and data submission guidelines (\url{https://nips.cc/public/guides/CodeSubmissionPolicy}) for more details.
        \item While we encourage the release of code and data, we understand that this might not be possible, so “No” is an acceptable answer. Papers cannot be rejected simply for not including code, unless this is central to the contribution (e.g., for a new open-source benchmark).
        \item The instructions should contain the exact command and environment needed to run to reproduce the results. See the NeurIPS code and data submission guidelines (\url{https://nips.cc/public/guides/CodeSubmissionPolicy}) for more details.
        \item The authors should provide instructions on data access and preparation, including how to access the raw data, preprocessed data, intermediate data, and generated data, etc.
        \item The authors should provide scripts to reproduce all experimental results for the new proposed method and baselines. If only a subset of experiments are reproducible, they should state which ones are omitted from the script and why.
        \item At submission time, to preserve anonymity, the authors should release anonymized versions (if applicable).
        \item Providing as much information as possible in supplemental material (appended to the paper) is recommended, but including URLs to data and code is permitted.
    \end{itemize}

\item {\bf Experimental Setting/Details}
    \item[] Question: Does the paper specify all the training and test details (e.g., data splits, hyperparameters, how they were chosen, type of optimizer, etc.) necessary to understand the results?
    \item[] Answer: \answerYes{} % Replace by \answerYes{}, \answerNo{}, or \answerNA{}.
    \item[] Justification: We provide the training details and hyperparameters in the appendix.
    \item[] Guidelines:
    \begin{itemize}
        \item The answer NA means that the paper does not include experiments.
        \item The experimental setting should be presented in the core of the paper to a level of detail that is necessary to appreciate the results and make sense of them.
        \item The full details can be provided either with the code, in appendix, or as supplemental material.
    \end{itemize}

\item {\bf Experiment Statistical Significance}
    \item[] Question: Does the paper report error bars suitably and correctly defined or other appropriate information about the statistical significance of the experiments?
    \item[] Answer: \answerYes{} % Replace by \answerYes{}, \answerNo{}, or \answerNA{}.
    \item[] Justification: We provide error bars and report the mean and median over different initializations.
    \item[] Guidelines:
    \begin{itemize}
        \item The answer NA means that the paper does not include experiments.
        \item The authors should answer "Yes" if the results are accompanied by error bars, confidence intervals, or statistical significance tests, at least for the experiments that support the main claims of the paper.
        \item The factors of variability that the error bars are capturing should be clearly stated (for example, train/test split, initialization, random drawing of some parameter, or overall run with given experimental conditions).
        \item The method for calculating the error bars should be explained (closed form formula, call to a library function, bootstrap, etc.)
        \item The assumptions made should be given (e.g., Normally distributed errors).
        \item It should be clear whether the error bar is the standard deviation or the standard error of the mean.
        \item It is OK to report 1-sigma error bars, but one should state it. The authors should preferably report a 2-sigma error bar than state that they have a 96\% CI, if the hypothesis of Normality of errors is not verified.
        \item For asymmetric distributions, the authors should be careful not to show in tables or figures symmetric error bars that would yield results that are out of range (e.g. negative error rates).
        \item If error bars are reported in tables or plots, The authors should explain in the text how they were calculated and reference the corresponding figures or tables in the text.
    \end{itemize}

\item {\bf Experiments Compute Resources}
    \item[] Question: For each experiment, does the paper provide sufficient information on the computer resources (type of compute workers, memory, time of execution) needed to reproduce the experiments?
    \item[] Answer: \answerYes{} % Replace by \answerYes{}, \answerNo{}, or \answerNA{}.
    \item[] Justification: The used compute resources are reported in the appendix
    \item[] Guidelines:
    \begin{itemize}
        \item The answer NA means that the paper does not include experiments.
        \item The paper should indicate the type of compute workers CPU or GPU, internal cluster, or cloud provider, including relevant memory and storage.
        \item The paper should provide the amount of compute required for each of the individual experimental runs as well as estimate the total compute. 
        \item The paper should disclose whether the full research project required more compute than the experiments reported in the paper (e.g., preliminary or failed experiments that didn't make it into the paper). 
    \end{itemize}
    
\item {\bf Code Of Ethics}
    \item[] Question: Does the research conducted in the paper conform, in every respect, with the NeurIPS Code of Ethics \url{https://neurips.cc/public/EthicsGuidelines}?
    \item[] Answer: \answerYes{} % Replace by \answerYes{}, \answerNo{}, or \answerNA{}.
    \item[] Justification: The paper conforms, in every respect, with the NeurIPS Code of Ethics
    \item[] Guidelines:
    \begin{itemize}
        \item The answer NA means that the authors have not reviewed the NeurIPS Code of Ethics.
        \item If the authors answer No, they should explain the special circumstances that require a deviation from the Code of Ethics.
        \item The authors should make sure to preserve anonymity (e.g., if there is a special consideration due to laws or regulations in their jurisdiction).
    \end{itemize}

\item {\bf Broader Impacts}
    \item[] Question: Does the paper discuss both potential positive societal impacts and negative societal impacts of the work performed?
    \item[] Answer: \answerYes{} % Replace by \answerYes{}, \answerNo{}, or \answerNA{}.
    \item[] Justification: The impacts are discussed in the conclusion
    \item[] Guidelines:
    \begin{itemize}
        \item The answer NA means that there is no societal impact of the work performed.
        \item If the authors answer NA or No, they should explain why their work has no societal impact or why the paper does not address societal impact.
        \item Examples of negative societal impacts include potential malicious or unintended uses (e.g., disinformation, generating fake profiles, surveillance), fairness considerations (e.g., deployment of technologies that could make decisions that unfairly impact specific groups), privacy considerations, and security considerations.
        \item The conference expects that many papers will be foundational research and not tied to particular applications, let alone deployments. However, if there is a direct path to any negative applications, the authors should point it out. For example, it is legitimate to point out that an improvement in the quality of generative models could be used to generate deepfakes for disinformation. On the other hand, it is not needed to point out that a generic algorithm for optimizing neural networks could enable people to train models that generate Deepfakes faster.
        \item The authors should consider possible harms that could arise when the technology is being used as intended and functioning correctly, harms that could arise when the technology is being used as intended but gives incorrect results, and harms following from (intentional or unintentional) misuse of the technology.
        \item If there are negative societal impacts, the authors could also discuss possible mitigation strategies (e.g., gated release of models, providing defenses in addition to attacks, mechanisms for monitoring misuse, mechanisms to monitor how a system learns from feedback over time, improving the efficiency and accessibility of ML).
    \end{itemize}
    
\item {\bf Safeguards}
    \item[] Question: Does the paper describe safeguards that have been put in place for responsible release of data or models that have a high risk for misuse (e.g., pretrained language models, image generators, or scraped datasets)?
    \item[] Answer: \answerNA{} % Replace by \answerYes{}, \answerNo{}, or \answerNA{}.
    \item[] Justification: This work is algorithmic and does not release special data or models.
    \item[] Guidelines:
    \begin{itemize}
        \item The answer NA means that the paper poses no such risks.
        \item Released models that have a high risk for misuse or dual-use should be released with necessary safeguards to allow for controlled use of the model, for example by requiring that users adhere to usage guidelines or restrictions to access the model or implementing safety filters. 
        \item Datasets that have been scraped from the Internet could pose safety risks. The authors should describe how they avoided releasing unsafe images.
        \item We recognize that providing effective safeguards is challenging, and many papers do not require this, but we encourage authors to take this into account and make a best faith effort.
    \end{itemize}

\item {\bf Licenses for existing assets}
    \item[] Question: Are the creators or original owners of assets (e.g., code, data, models), used in the paper, properly credited and are the license and terms of use explicitly mentioned and properly respected?
    \item[] Answer: \answerNA{} % Replace by \answerYes{}, \answerNo{}, or \answerNA{}.
    \item[] Justification: Not needed for this work
    \item[] Guidelines:
    \begin{itemize}
        \item The answer NA means that the paper does not use existing assets.
        \item The authors should cite the original paper that produced the code package or dataset.
        \item The authors should state which version of the asset is used and, if possible, include a URL.
        \item The name of the license (e.g., CC-BY 4.0) should be included for each asset.
        \item For scraped data from a particular source (e.g., website), the copyright and terms of service of that source should be provided.
        \item If assets are released, the license, copyright information, and terms of use in the package should be provided. For popular datasets, \url{paperswithcode.com/datasets} has curated licenses for some datasets. Their licensing guide can help determine the license of a dataset.
        \item For existing datasets that are re-packaged, both the original license and the license of the derived asset (if it has changed) should be provided.
        \item If this information is not available online, the authors are encouraged to reach out to the asset's creators.
    \end{itemize}

\item {\bf New Assets}
    \item[] Question: Are new assets introduced in the paper well documented and is the documentation provided alongside the assets?
    \item[] Answer: \answerNA{} % Replace by \answerYes{}, \answerNo{}, or \answerNA{}.
    \item[] Justification: Not needed for this work
    \item[] Guidelines:
    \begin{itemize}
        \item The answer NA means that the paper does not release new assets.
        \item Researchers should communicate the details of the dataset/code/model as part of their submissions via structured templates. This includes details about training, license, limitations, etc. 
        \item The paper should discuss whether and how consent was obtained from people whose asset is used.
        \item At submission time, remember to anonymize your assets (if applicable). You can either create an anonymized URL or include an anonymized zip file.
    \end{itemize}

\item {\bf Crowdsourcing and Research with Human Subjects}
    \item[] Question: For crowdsourcing experiments and research with human subjects, does the paper include the full text of instructions given to participants and screenshots, if applicable, as well as details about compensation (if any)? 
    \item[] Answer: \answerNA{} % Replace by \answerYes{}, \answerNo{}, or \answerNA{}.
    \item[] Justification:  Not needed for this work
    \item[] Guidelines:
    \begin{itemize}
        \item The answer NA means that the paper does not involve crowdsourcing nor research with human subjects.
        \item Including this information in the supplemental material is fine, but if the main contribution of the paper involves human subjects, then as much detail as possible should be included in the main paper. 
        \item According to the NeurIPS Code of Ethics, workers involved in data collection, curation, or other labor should be paid at least the minimum wage in the country of the data collector. 
    \end{itemize}

\item {\bf Institutional Review Board (IRB) Approvals or Equivalent for Research with Human Subjects}
    \item[] Question: Does the paper describe potential risks incurred by study participants, whether such risks were disclosed to the subjects, and whether Institutional Review Board (IRB) approvals (or an equivalent approval/review based on the requirements of your country or institution) were obtained?
    \item[] Answer: \answerNA{} % Replace by \answerYes{}, \answerNo{}, or \answerNA{}.
    \item[] Justification:  Not needed for this work
    \item[] Guidelines:
    \begin{itemize}
        \item The answer NA means that the paper does not involve crowdsourcing nor research with human subjects.
        \item Depending on the country in which research is conducted, IRB approval (or equivalent) may be required for any human subjects research. If you obtained IRB approval, you should clearly state this in the paper. 
        \item We recognize that the procedures for this may vary significantly between institutions and locations, and we expect authors to adhere to the NeurIPS Code of Ethics and the guidelines for their institution. 
        \item For initial submissions, do not include any information that would break anonymity (if applicable), such as the institution conducting the review.
    \end{itemize}

\end{enumerate}

\end{document}